\documentclass{article}

\PassOptionsToPackage{dvipsnames}{xcolor}

\usepackage{microtype}
\usepackage{graphicx}
\graphicspath{{./}}
\usepackage{subcaption}
\usepackage{booktabs} %
\usepackage{hyperref}
\usepackage{comment}
\usepackage{dirtytalk}
\usepackage{siunitx}
\usepackage{wrapfig}
\usepackage{enumitem}
\usepackage[preprint]{icml2026}

\usepackage{amsmath}
\usepackage{amssymb}
\usepackage{mathtools}
\usepackage{amsthm}
\theoremstyle{plain}
\newtheorem{theorem}{Theorem}
\usepackage{listings}
\DeclareFixedFont{\ttb}{T1}{txtt}{bx}{n}{12} %
\DeclareFixedFont{\ttm}{T1}{txtt}{m}{n}{12}  %
\definecolor{sjLightGray}{HTML}{E6E6E6} %
\definecolor{sjMidGray}{HTML}{CECECE} %
\definecolor{sjDarkGray}{HTML}{A6A6A6} %

\definecolor{sjLightYellow}{HTML}{EAD297} %
\definecolor{sjMidYellow}{HTML}{E1BE6A} %
\definecolor{sjDarkYellow}{HTML}{D7A83D} %

\definecolor{sjLightGreen}{HTML}{93E6C8} %
\definecolor{sjMidGreen}{HTML}{6DD1AC} %
\definecolor{sjDarkGreen}{HTML}{368F80} %

\definecolor{sjLightBlue}{HTML}{8EE2FF} %
\definecolor{sjMidBlue}{HTML}{00BFFF} %
\definecolor{sjDarkBlue}{HTML}{00A5DA} %

\definecolor{sjLightPurple}{HTML}{A8B9E3} %
\definecolor{sjMidPurple}{HTML}{889FD9} %
\definecolor{sjDarkPurple}{HTML}{5B7BCB} %

\definecolor{sjLightPink}{HTML}{E7A1E5} %
\definecolor{sjMidPink}{HTML}{DB70D8} %
\definecolor{sjDarkPink}{HTML}{B441C1} %

\definecolor{sjLightRed}{HTML}{F4836D} %
\definecolor{sjMidRed}{HTML}{F06247} %
\definecolor{sjDarkRed}{HTML}{CF2F12} %

\definecolor{codegray}{rgb}{0.5,0.5,0.5}

\lstdefinelanguage{PythonCustom}[]{Python}{ 
  morekeywords=[2]{softjax,sj,@sj},        
  keywordstyle=[2]\color{sjDarkPink}\bfseries,
  morekeywords=[3]{jnp,jax,lax},        
  keywordstyle=[3]\color{sjDarkGreen}\bfseries 
}

\usepackage{amsmath,amsfonts,bm}

\def\1{\bm{1}}

\DeclareMathAlphabet{\mathsfit}{\encodingdefault}{\sfdefault}{m}{sl}
\SetMathAlphabet{\mathsfit}{bold}{\encodingdefault}{\sfdefault}{bx}{n}

\DeclareMathOperator{\sort}{sort}
\DeclareMathOperator{\topk}{topk}
\DeclareMathOperator{\quantile}{quantile}

\DeclareMathOperator{\rank}{rank}

\DeclareMathOperator*{\argmax}{arg\,max}
\DeclareMathOperator*{\argmin}{arg\,min}
\DeclareMathOperator{\argrank}{arg\,rank}
\DeclareMathOperator{\argsort}{arg\,sort}
\DeclareMathOperator{\argtopk}{arg\,topk}
\DeclareMathOperator{\argquantile}{arg\,quantile}
\DeclareMathOperator{\argmedian}{arg\,median}

\newcommand{\operator}[1]{$\operatorname{#1}$}

\newcommand{\softjax}{SoftJAX\xspace}
\newcommand{\softtorch}{SoftTorch\xspace}

\newcommand{\fs}[1]{{\bf #1}}
\usepackage[dvipsnames]{xcolor}
\definecolor{ourblue}{rgb}{0.368,0.507,0.71}    %
\definecolor{ourorange}{rgb}{0.881,0.611,0.142} %
\definecolor{ourgreen}{rgb}{0.56,0.692,0.195}   %
\definecolor{ourred}{rgb}{0.923,0.386,0.209}    %
\definecolor{ourviolet}{rgb}{0.528,0.471,0.701} %
\definecolor{ourbrown}{rgb}{0.772,0.432,0.102}  %
\definecolor{ourazure}{rgb}{0.364,0.619,0.782}  %
\definecolor{ourolive}{rgb}{0.572,0.586,0.}     %
\definecolor{ourgray}{RGB}{102,88,84}           %
\definecolor{ourblue2}{RGB}{9,134,223} %
\definecolor{ourdarkblue2}{RGB}{5,97,164} %
\definecolor{ourlightblue2}{RGB}{132,201,250} %

\definecolor{ourorange2}{RGB}{224,90,18} %
\definecolor{ourdarkorange2}{RGB}{160,63,9} %
\definecolor{ourlightorange2}{RGB}{246,175,137} %

\definecolor{ouryellow2}{RGB}{227,213,25} %
\definecolor{ourdarkyellow2}{RGB}{177,166,17} %
\definecolor{ourlightyellow2}{RGB}{242,235,140} %

\definecolor{ourpink2}{RGB}{247,24,139} %
\definecolor{ourdarkpink2}{RGB}{164,4,86} %
\definecolor{ourlightpink2}{RGB}{250,163,207} %

\definecolor{ourgreen2}{RGB}{159,198,52} %
\definecolor{ourdarkgreen2}{RGB}{109,138,30} %
\definecolor{ourlightgreen2}{RGB}{209,228,154} %

\definecolor{ourgray2}{RGB}{124,124,115} %
\definecolor{ourdarkgray2}{RGB}{87,87,81} %
\definecolor{ourlightgray2}{RGB}{194,194,189} %

\newcommand{\heaviside}{\operatorname{H}}

\usepackage[capitalize,noabbrev]{cleveref}

\usepackage{xspace}
\newcommand{\eg}{{e.\,g.,}}
\newcommand{\ie}{{i.\,e.,}}

\newcommand{\T}{\top}
\usepackage[textsize=tiny]{todonotes}

\icmltitlerunning{
\softjax \& \softtorch: Empowering Automatic Differentiation Libraries with Informative Gradients
}

\begin{document}

\twocolumn[
  \icmltitle{
  \softjax \& \softtorch: \\ 
  Empowering Automatic Differentiation Libraries with Informative Gradients
  }

  \icmlsetsymbol{equal}{*}

  \begin{icmlauthorlist}
    \icmlauthor{Anselm Paulus}{equal,ut}
    \icmlauthor{A.\ René Geist}{equal,ut}
    \icmlauthor{Vít Musil}{bro}
    \icmlauthor{Sebastian Hoffmann}{sb}
    \icmlauthor{Onur Beker}{ut}
    \icmlauthor{Georg Martius}{ut,is}
  \end{icmlauthorlist}

  \icmlaffiliation{ut}{University of Tübingen, Germany}
  \icmlaffiliation{bro}{Masaryk University, Czechia}
  \icmlaffiliation{sb}{Max Planck Institute for Biogeochemistry, Germany}
  \icmlaffiliation{is}{Max Planck Institute for Intelligent Systems, Tübingen, Germany}

  \icmlcorrespondingauthor{Anselm Paulus}{anselm.paulus@tuebingen.mpg.de}
  \icmlcorrespondingauthor{A.\ René Geist}{rene.geist@uni-tuebingen.de}

  \icmlkeywords{Machine Learning, ICML}

  \vskip 0.3in
]

\printAffiliationsAndNotice{\textsuperscript{*} Equal contribution. Libraries implemented by Anselm Paulus.}

\begin{abstract}
    Automatic differentiation (AD) frameworks such as JAX and PyTorch have enabled gradient-based optimization for a wide range of scientific fields.
    Yet, many \say{hard} primitives in these libraries such as thresholding, Boolean logic, discrete indexing, and sorting operations yield zero or undefined gradients that are not useful for optimization.
    While numerous \say{soft} relaxations have been proposed that provide informative gradients, the respective implementations are fragmented across projects, making them difficult to combine and compare.
    This work introduces \softjax and \softtorch, open-source, feature-complete libraries for \emph{soft differentiable programming}.
    These libraries provide a variety of soft functions as drop-in replacements for their hard JAX and PyTorch counterparts.
    This includes 
    (i) elementwise operators such as \operator{clip} or \operator{abs}, 
    (ii) utility methods for manipulating Booleans and indices via fuzzy logic,
    (iii) axiswise operators such as \operator{sort} or \operator{rank} -- based on optimal transport or permutahedron projections, 
    and (iv) offer full support for straight-through gradient estimation.
    Overall, \softjax and \softtorch make the toolbox of soft relaxations easily accessible to differentiable programming, as demonstrated through benchmarking and a practical case study.
    Code is available at \href{https://github.com/a-paulus/softjax}{\texttt{github.com/a-paulus/softjax}} and \href{https://github.com/a-paulus/softtorch}{\texttt{github.com/a-paulus/softtorch}}.
\end{abstract}

\begin{figure}[t]
    \centering
    \vspace{-0.3cm}
    \includegraphics[width=\linewidth]{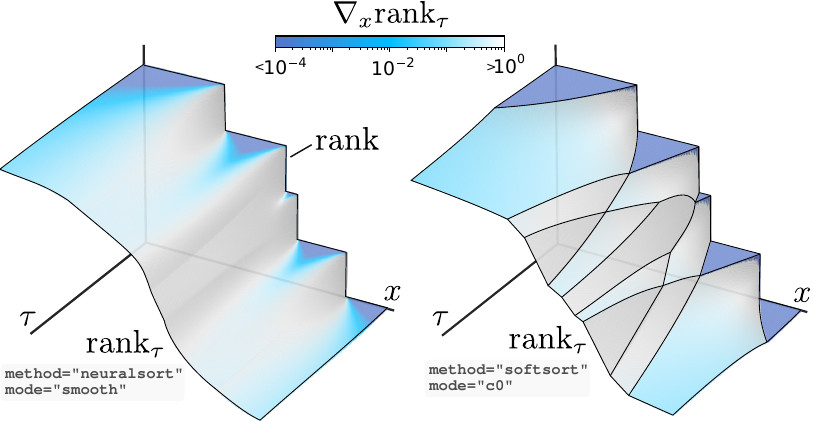}
    \vspace{-5mm}
    \begin{lstlisting}[basicstyle=\ttfamily\footnotesize]
import softjax as sj
arr = jnp.array([x, -1.0, 0.5, 0.8, 2.0])
ranks = sj.rank(arr, method="neuralsort", 
                     mode="smooth")[0]
    \end{lstlisting}
    \vspace{-.5em}
    \caption{\textbf{Top:} \softjax and \softtorch provide differentiable surrogates for discrete operations \ie~ $\operatorname{rank}_{\tau}$ instead of $\operatorname{rank}$. Numerous soft approximations can be obtained by tuning the softness parameter $\tau$, selecting the softening method (\eg~``neuralsort'' or ``softsort''), and choosing a smoothness mode (\eg~\texttt{smooth} for $\mathcal{C}^\infty$ or \texttt{c1} for $\mathcal{C}^1$). \textbf{Bottom:} The soft surrogates shown above are instantiated using just three lines of code.
    }
    \label{fig:overview}
    \vspace{-4mm}
\end{figure}

\vspace{-0.6cm}
\section{Introduction}
\fs{Automatic differentiation (AD) frameworks have enabled rapid progress in machine learning.}
They make gradient computation efficient, user-friendly, and composable.
Thereby, they become a ubiquitous tool widely adopted beyond machine learning.
This includes (i) \emph{differentiable rendering}~\citep{wang2019differentiable} using Nerfs~\citep{mildenhall2020nerf} and Gaussian splatting~\citep{kerbl2023gaussiansplatting};
(ii) \emph{differentiable simulation}, such as MuJoCo XLA (MJX)~\citep{todorov2012mujoco,mjx,paulus2025diffmjx} and WARP~\citep{mjwarp2025};
(iii) \emph{structured prediction} like ranking~\citep{rolinek2020optimizing} and matching~\citep{rolinek2020deep}; (iv) \emph{combinatorial layers} for CEM~\citep{amos2020cem}, MPC~\citep{amos2018mpc}, and discrete decisions \citep{pogancic2020blackboxbackprop,berthet2020learning}; (v) \emph{differentiable optimization}~\citep{amos2017optnet,agrawal2019differentiable,blondel2022efficient};
and (vi) \emph{physical simulations}, \eg{} for a gravitational wave detector \citep{ruizgonzalez2025neuralsurrogatesdesigninggravitational}.
However, the programs for these and many more applications contain classical operations using comparisons for branching code, ranking elements, and so forth.

\fs{Unfortunately, AD does not necessarily result in informative gradients.}
Here, \emph{informative} means providing a stable local direction leading to improvements in the objective. 
Uninformative are cases with zero gradients (\eg{} rounding, comparisons, indexing, sort/max/top-k/median) and arbitrary subgradients (ReLU at zero, 
sort with duplicates).
Many applications call for replacing such hard operations with soft ones that yield informative gradients.
A successful example is addressing the \say{dying ReLU problem}~\citep{Lu_2020}, where ReLU's zero gradients hinder gradient-based optimization of deep neural networks, by replacing the ReLU function with a smooth relaxation such as SiLU or Softplus.\looseness-1

\fs{While many relaxation techniques have been proposed, the existing toolbox is fragmented.}
Proposed techniques include analytic smooth surrogates (\eg{} SiLU for ReLU~\citep{hendrycks2016gelu}, sigmoid for steps~\citep{petersen2021differentiable,petersen2022gendr}, and softmax for argmax \cite{beker2025ssdf, beker2026contax}),
optimization-based approaches (\eg{} sorting via regularized optimal transport~\citep{cuturi2019differentiable,xie2020differentiable,blondel2018smooth} or projections onto the permutahedron~\citep{blondel2020fast,sander2023fast}, proximal methods~\citep{paulus2024lpgd}),
stochastic relaxations (\eg{} Gumbel-softmax~\citep{maddison2017gumbel,jang2017gumbel,paulus2020stoachstic}, perturbed optimizers~\citep{berthet2020learning,niepert2021imle}),
gradient “hacks” / estimators (straight-through~\citep{bengio2013straightthrough,sahoo2023identity}, black-box differentiation~\citep{pogancic2020blackboxbackprop})
and more.

\fs{To unify soft relaxations and ease their use, we present \emph{\softjax} and \emph{\softtorch}.}
Our libraries provide soft drop-in replacements %
for many commonly used hard operations in JAX~\citep{jax2018github} and  Pytorch~\citep{pytorch}. In turn, \softjax and \softtorch form easy to use abstraction layers between user code and underlying primitive operations.
Our treatment also draws unifying connections, generalizes various techniques, and provides comprehensible \say{softness} knobs and \say{mode} families across operations.
As an example, see \cref{fig:overview} where we plot the soft \operator{rank} operator for varying softness parameters $\tau>0$ and softening modes.
As for all our relaxations, we recover the original hard operation as $\tau\rightarrow 0^+$.
By combining the vast toolbox of soft relaxations, our libraries make \emph{soft differentiable programming} accessible.

We next explain the operators implemented in \softjax and \softtorch and how their hard versions are relaxed to soft counterparts.
\Cref{sec:softening} reviews the basics of soft surrogates and straight-through estimation.
\Cref{sec:elementwise} then describes the softening of elementwise operators in \softjax and \softtorch, such as \operator{sign} or \operator{round}, via  relaxations of the Heaviside step function.
\Cref{sec:axiswise} shows how more involved axiswise operators, such as \operator{sort} or \operator{quantile}, can be softened via optimal transport or projections onto the unit simplex or the permutahedron.
An overview of all currently implemented functions in \softjax is available in~\cref{fig:function_overview}.
Finally, \cref{sec:benchmarks} presents runtime and memory comparisons to help end users select suitable methods that balance the trade-offs required by their applications.

\section{Softening and Straight-through estimation} \label{sec:softening}
Many functions used in modern ML frameworks are poorly suited for automatic differentiation as they are discontinuous or have large regions with zero derivative.
To resolve this, our libraries have one main goal: to provide informative gradients for any program written in the supported frameworks.
The underlying mechanism rests on two core concepts: \emph{soft surrogates} and \emph{straight-through estimation}. 

\paragraph{Soft surrogate.} 
\emph{Soft surrogates} replace the original function with another function that has more informative gradients for optimization.
Specifically, we call the function~$f_{\tau}$ with softening parameter $\tau>0$ a \emph{soft surrogate} of~$f$ if
(i)~$f_\tau$ is continuous and differentiable almost everywhere,
(ii)~$f_\tau$ yields informative gradients over the relevant domain, \ie~avoids extended regions of zero derivative, and
(iii)~$f_\tau$ recovers $f$ in the limit $\tau\to 0^+$.
The softening parameter~$\tau$ controls the trade-off between faithfulness to~$f$ and gradient informativeness: larger~$\tau$ benefits differentiability, while $\tau \to 0^+$ causes $f_{\tau}$ to approach the original function.
See \cref{fig:relu_st} for an example of a soft relu surrogate.
Note that softening a function may involve significantly altering the output domain, \eg{} a soft surrogate for boolean operations outputs a probability instead of a Boolean value, likewise a soft surrogate for an argmax operation outputs a probability distribution over indices instead of an~index.

\begin{figure}
    \centering
    \includegraphics[width=\linewidth]{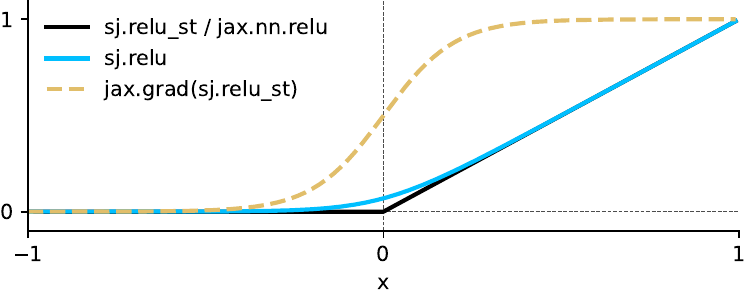}
    \caption{The function \texttt{relu\_st} resorts to the straight-through trick to use the hard function \texttt{jax.nn.relu} in the forward pass while using the soft function \texttt{sj.relu} for gradient computation.}
    \label{fig:relu_st}
\end{figure}

\paragraph{Straight-through estimation.}
Unfortunately, blindly replacing a function with a soft surrogate may introduce undesired effects in the forward pass, \eg{} resulting in unphysical rollout trajectories of a simulator.
Fortunately, these problems can be addressed effectively through \emph{straight-through estimation (STE)}.

Historically, STE dates back to early work on single-layer perceptrons \citep{rosenblatt1957perceptron}, where the derivative of binary activations was replaced with the derivative of the identity map. 
Following \citet{bengio2013straightthrough}, STE keeps the original function in the forward pass during automatic differentiation, but uses the gradient of a soft surrogate in the backward pass.
In our framework, STE is implemented via the \href{https://docs.jax.dev/en/latest/higher-order.html#straight-through-estimator-using-stop-gradient}{straight-through trick}, writing
\begin{align}
     f_{\text{STE}}(x) &= \operatorname{sg}(f(x)) + f_{\tau}(x) - \operatorname{sg}(f_{\tau}(x)),
\end{align}
where $\operatorname{sg}$ denotes the stop-gradient operator of an automatic differentiation library.
Therefore, we obtain $f_{\text{STE}}=f$ on the forward pass, but $\nabla f_{\text{STE}}=\nabla f_{\tau}$ on the backward pass.
In practice, using a soft surrogate for differentiation amounts to wrapping a soft function like \texttt{sj.relu} in the \texttt{sj.st} function decorator, which implements the straight-through trick, \ie~\texttt{y\_soft\_st = sj.st(sj.relu)}.
\cref{fig:relu_st} illustrates straight-through wrapping a soft relu surrogate.

\begin{figure}
    \centering
    \includegraphics[width=0.9\linewidth]{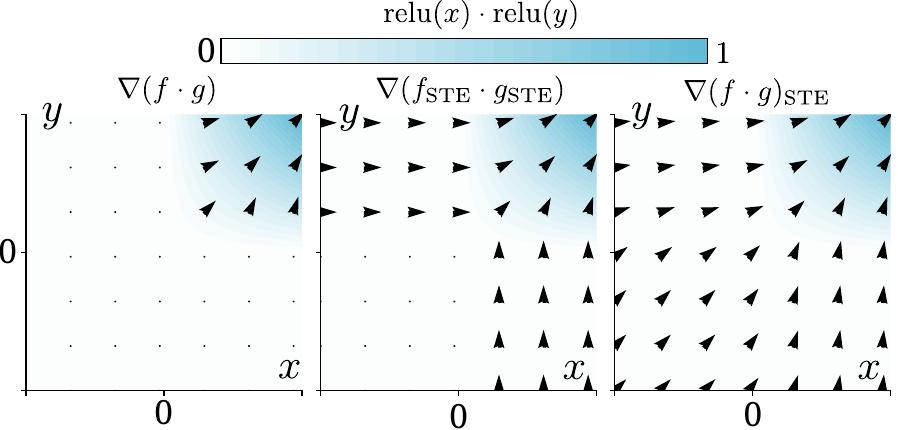}
    \caption{Arrows depict the normalized gradient of the product of two functions $f(x,y)=\operatorname{relu}(x)$ and $g(x,y)=\operatorname{relu}(y)$. Applying straight-through estimation on each function individually causes the gradient $\nabla (f_{\text{STE}} \cdot g_{\text{STE}})$ to be zero if $x<0$, whereas the gradient $\nabla (f \cdot g)_{\text{STE}}$ is non-zero.}
    \label{fig:ste}
    \vspace{-4mm}
\end{figure}

\paragraph{The straight-through pitfall.}
A subtle issue arises when STE-wrapped functions interact multiplicatively. 
Automatic differentiation computes the gradient between the product of  functions $f_{\text{STE}}(x)$ and $g_{\text{STE}}(y)$ via the product rule as
\begin{equation} \label{eq:ste_pitfall}
    \nabla(f_{\text{STE}} \cdot g_{\text{STE}})
        = \nabla f_{\tau} \cdot g + f \cdot \nabla g_{\tau},
\end{equation}
where the original functions $f$ and $g$ appear as multiplicative gates in the backward pass.
This defeats the purpose of softening, as the gradient can still vanish when scaled by $f$ or $g$.
As illustrated in \cref{fig:ste} for the case of $f(x,y)\coloneqq\operatorname{relu}(x)$ and $g(x,y)=\operatorname{relu}(y)$, $\nabla(f_{\text{STE}} \cdot g_{\text{STE}})$ is zero when $x,y<0$.
To avoid gradient multiplication by zero, STE should be applied to the \emph{composite function} such that
\begin{equation}
        \nabla(f \cdot g)_{\text{STE}} = \nabla(f_{\tau} \cdot g_{\tau}) = \nabla f_{\tau} \cdot g_{\tau} + f_{\tau} \cdot \nabla g_{\tau}.
\end{equation}
In code, this reads: (notice the decorator)
\begin{lstlisting}
@sj.st
def relu_prod(x, y, **kwargs):
    relu_x = sj.relu(x, **kwargs)
    relu_y = sj.relu(y, **kwargs)
    return relu_x * relu_y
\end{lstlisting}
To our knowledge, this ``STE pitfall'' and its remedy have not been explicitly discussed before. 

\begin{figure}
    \centering
    \includegraphics[width=\linewidth]{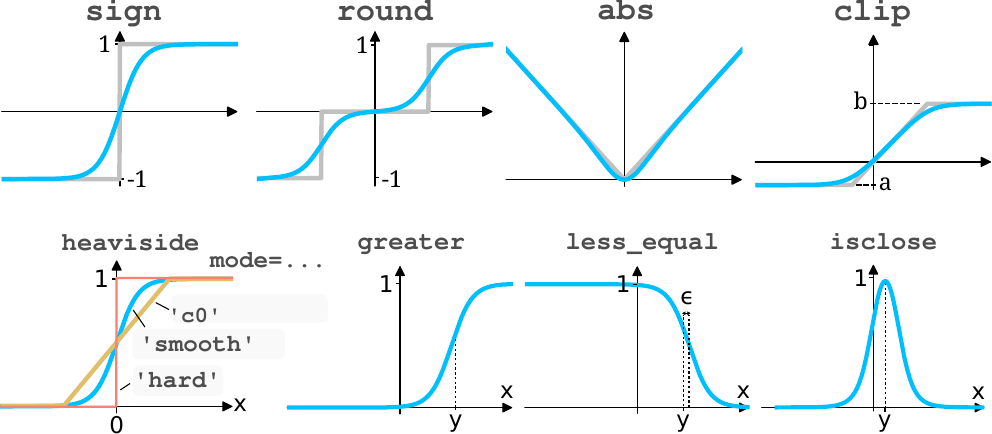} %
    \caption{\textbf{Top:} The heaviside function surrogates $H_{\tau}$ are used to derive the $\operatorname{sign}$, $\operatorname{round}$, $\operatorname{abs}$, and $\operatorname{clip}$ functions. \textbf{Bottom:} Comparison operations compare values $x,y\in\mathbb{R}$ by interpreting $H_{\tau}$ as a CDF defining the probability that $x>0$.
    } 
    \label{fig:overview_elementwise}
    \vspace{-4mm}
\end{figure}

\section{Elementwise Operators} \label{sec:elementwise}
Conceptually, all our elementwise soft operators, some of which are illustrated in \cref{fig:overview_elementwise}, 
are rooted in the relaxation of the \emph{Heaviside step function,} defined as 
\begin{align}
    \heaviside(x) \coloneqq
    \begin{cases}
        0  & \text{if } x < 0, \\
        0.5 & \text{if } x = 0, \\
        1 & \text{else}.
    \end{cases}
\end{align}
Its derivative is zero everywhere except at the origin, where it is not defined.
Following many previous works~\citep{funahashi1989approximate, han1995influence, rojas1996backpropagation, mhaskar1996neuralnetworks, duch1999survey, huang2006can, huang2006extremelearningmachine, yuen2021universal, li2021differentiable}, we soften the Heaviside function with a sigmoidal function that traces a characteristic S-shape. Some examples include the standard exponential-based sigmoid (referred to as \texttt{smooth} mode%
)
\begin{align} \label{eq:heaviside_entropic}
    \heaviside_\tau(x)
    \coloneqq \sigma\bigl(\tfrac{x}{\tau}\bigr) = \frac{1}{1 + \exp(-x/\tau)},
\end{align}
or the piece-wise polynomial-based sigmoidals%
\footnote{We overload the notation here.}
\begin{align}
    \label{eq:heaviside_polynomial}
    \heaviside_\tau(x) \coloneqq \begin{cases}
        0  & \text{if } x < -\tau,
            \\
        g(\tfrac{x}{\tau}) & \text{if } -\tau \leq x \leq \tau,
            \\
        1 & \text{else,}
    \end{cases}
\end{align}
with $g_\mathtt{c0}(s) \coloneqq \tfrac{1}{2} + \tfrac{s}{2}$, $g_\mathtt{c1}(s) \coloneqq \tfrac{1}{2} + \tfrac{3s}{4} - \tfrac{s^3}{4}$, and $g_\mathtt{c2}(s) \coloneqq \tfrac{1}{2} + \tfrac{15s}{16} - \tfrac{5s^3}{8} + \tfrac{3s^5}{16}$.
The piecewise modes transition on~$[-\tau, \tau]$; in the library, their input is rescaled by~$1/5$ so that all modes share an effective transition width of approximately~$10\tau$, matching the smooth mode ($\sigma(\pm 5) \approx 0.007/0.993$).
These piece-wise Heaviside relaxations are either continuous (linear, also referred to as \texttt{c0} mode%
), differentiable (\texttt{c1}), or twice differentiable (\texttt{c2}).

\subsection{Sign, abs, and round}
A soft surrogate for the \operator{sign} function is obtained by the Heaviside function and subtracting a constant. 
Moreover, a soft surrogate for the \operator{abs} function is obtained by gating the soft \operator{sign} function with $x$, reading
\begin{align}
    \operatorname{sign}_\tau(x) &\coloneqq 2 \cdot \heaviside_\tau(x) - 1,
        \\
    \label{eq:soft-abs}
    \operatorname{abs}_\tau(x) &\coloneqq \operatorname{sign}_\tau(x) \cdot x.
\end{align}
Following \citet{semenov2025smooth}, the \operator{round} function is implemented as the difference between shifted sigmoidals
\begin{align*}
    \footnotesize
    \begin{split}
        \operatorname{round}_\tau(x) \coloneqq \hspace{-2ex}\sum_{k=\lfloor x \rfloor - K}^{\lfloor x \rfloor + K}\hspace{-1ex} k \bigl[\heaviside_\tau\bigl(x-k+\tfrac12\bigr)
        - \heaviside_\tau\bigl(x-k-\tfrac12\bigr)\bigr],
    \end{split}
\end{align*}
where $K$ controls the number of neighbouring sigmoidals that are taken into account.
Larger temperatures $\tau$ require a higher $K$, as each sigmoidal is softened over a larger region, thereby increasing its region of influence.

\subsection{ReLU and clip}
Due to its widespread use in ML, the rectified linear unit (ReLU) deserves extra attention. The function $\operatorname{relu}(x) \coloneqq \max\{0,x\}$ has zero gradient for $x<0$, and at $x=0$ it is only sub-differentiable.
To get a soft surrogate, we can transform the sigmoidal via
Integration 
\begin{align}
    \label{eq:relu-integration-style}
    \operatorname{relu}_\tau(x)
        \coloneqq{}& \int_0^x\heaviside_{\tau}(t)\,\mathrm d t,
\end{align}
or via a gating mechanism
\begin{align}
    \label{eq:relu-gating-style}
    \operatorname{relu}_\tau(x)
        \coloneqq{}& x \cdot \heaviside_{\tau}(x).
\end{align}
Therefore, for each of the Heaviside relaxations, %
we define two ReLU relaxations, some of which are well known. For instance, integrating the sigmoid yields the Softplus
$\tau\log(1+e^{x/\tau}).$
On the other hand, gating the sigmoid recovers the SiLU activation~\citep{hendrycks2016gelu}
$x\cdot\sigma(x/\tau)$.
From the soft ReLU, we can now derive a soft surrogate for the \operator{clip} function as
\begin{align}
    \operatorname{clip}_\tau(x, a, b) &\coloneqq a + \operatorname{relu}_\tau(x-a) - \operatorname{relu}_\tau(x-b).
\end{align}

\subsection{Comparison operators and differentiable logic}
Next, we turn to soft surrogates of comparison operators, such as \operator{greater} or \operator{equal}.
The hard operators output a Boolean variable $b\in\{\text{True},\text{False}\}$, whereas our soft surrogates output a value in the interval $[0,1]$.
We directly get the soft comparison operators via the soft Heaviside function
\begin{align}
    \begin{split}
    \operatorname{greater}_\tau(x,y) &= \heaviside_\tau(x-y-\epsilon),
        \\
    \operatorname{gtr\_equal}_\tau(x,y) &= \heaviside_\tau(x-y+\epsilon),
        \\
    \operatorname{less}_\tau(x,y) &= \heaviside_\tau(y-x-\epsilon),
        \\
    \operatorname{less\_equal}_\tau(x,y) &= \heaviside_\tau(y-x+\epsilon),
        \\
    \operatorname{equal}_\tau(x,y) &= 2 \cdot \operatorname{less\_equal}_\tau(\operatorname{abs}_{\tau}(x-y),0),
        \\
    \operatorname{isclose}_\tau(x,y) &= 2 \cdot \operatorname{less\_equal}_\tau(\operatorname{abs}_{\tau}(x-y),\text{tol}).
    \end{split}\hspace{-2em}
\end{align}
Here, $\epsilon$ denotes the machine precision to ensure that the surrogates converge to the correct hard function as $\tau\to 0^+$.
In the definition of $\operatorname{isclose}_\tau$, we use a standard tolerance $\text{tol}=\text{atol}+\text{rtol}\cdot \operatorname{abs}_{\tau}(y)$ with $\text{atol}>0$ and $\text{rtol}>0$ denoting the relative and absolute tolerance. 

The output of our soft surrogates can be interpreted as the probability of the hard comparison being True.
For the sake of simplicity, we refer to such a probability as a soft Boolean or, in short, a \say{SoftBool}.
This perspective has been studied extensively in the field of fuzzy logic %
\cite{belohlavek2017fuzzy}. Given multiple SoftBools $p_1,\ldots,p_n$, differentiable logic operators reduce to the manipulation of probabilities. 
By default, our framework implements the logical \operator{all} operator as
\begin{align}
    \operatorname{all}(p_1,\dots, p_n) &\coloneqq \prod_{j=1}^n p_j,
\end{align}
denoting the joint probability that independent Boolean events $p_1,\ldots,p_n$ equate to True.
Alternatively, we allow using the geometric mean providing advantageous numerical scaling for gradient computation.
Similarly, the \operator{not} operator yields the probability that a Boolean event is False
\begin{align}
    \operatorname{not}(p) \coloneqq \neg p &= 1 - p.
\end{align}
Other fuzzy logical operators are derived using the \operator{all} and \operator{not} operators
\begin{align}
    \begin{split}
    \operatorname{any}(p_1,\ldots,p_n) &\coloneqq \neg\operatorname{all}(\neg p_1,\ldots, \neg p_n),
        \\
    \operatorname{and}(p,q) &\coloneqq \operatorname{all}(p,q),
        \\
    \operatorname{or}(p,q) &\coloneqq \operatorname{any}(p,q),
        \\
    \operatorname{xor}(p,q) &\coloneqq \operatorname{or}\bigl(\operatorname{and}(p,\neg q), \operatorname{and}(\neg p, q)\bigr).
    \end{split}
\end{align}
Similar to how normal Boolean variables can be used to select elements among two arrays, we can use a SoftBool variable $p_i$ to do a soft selection via the expectation
\begin{equation} \label{eq:elementwise_expectation}
    z_i = p_i \cdot x_i + (1-p_i) \cdot y_i.
\end{equation}
In code, this is abstracted from the user via
\begin{lstlisting}[basicstyle=\ttfamily\scriptsize]
soft_cond = sj.greater(x, y) # [0.05 0.73 0.98]
z = sj.where(soft_cond, x, y) # [0.39 0.27 0.69]
\end{lstlisting}

\section{Axiswise Operators} \label{sec:axiswise}

After discussing elementwise operators in the previous section, this section discusses and derives soft surrogates for axis-wise operators such as \operator{argmax}, \operator{sort} or \operator{argtopk}.

The simplest example of an axiswise operation is the \operator{(arg)max} operator.
The standard \operator{argmax} operator returns an index $j\in\{0,\ldots,n-1\}$, and the \operator{max} operator simply selects the corresponding element $x_j$.
Another way to write this is as an inner product $x_j = \mathbf{e}_j \cdot \mathbf{x}$, where $\mathbf{e}_j$ is one at index $j$ and zero everywhere else.
We can now relax the notion of an index as an indicator vector to a ``SoftIndex'' on the unit simplex
\begin{align}
    \label{eq:simplex-definition}
    \Delta_n \coloneqq \big\{ \mathbf{p}\in[0,1]^n \mid \sum_{0\le j<n} p_j = 1\big\}.
\end{align}
In the case of the \operator{argmax} operator, a simple relaxation that produces a SoftIndex is the softmax function (more accurately referred to as \emph{softargmax}).
The previous dot-product index-selection can now be interpreted as an expectation%
\begin{align}
    \mathbf{p} \cdot \mathbf{x} &= \sum_{0\le j<n} p_j x_j = \mathop{\mathbb{E}}_{j\sim\mathbf{p}}[x_j].
\end{align}
In code, our drop-in replacements read as:
\begin{lstlisting}[basicstyle=\ttfamily\scriptsize]
x = jnp.array([0.1, 0.4, 0.8])

idx = jnp.argmax(x) # 2
y = jax.lax.dynamic_index_in_dim(x, idx) # [0.8]

soft_idx = sj.argmax(x) # [0.004 0.042 0.953]
y = sj.dynamic_index_in_dim(x, soft_idx) # [0.78]
\end{lstlisting}

We can further generalize this technique of using SoftIndices to sorting and ranking.
To see this, we first note that sorting an array $\mathbf{x}\in\mathbb{R}^n$ can be viewed as multiplying it with the sorting permutation matrix
\begin{align}
    \label{eq:permutation-set}
    P^\star \in \Sigma = \{ P \in \{0,1\}^{n\times n} \mid P\mathbf{1}_n = \mathbf{1}_n, P^\top\mathbf{1}_n = \mathbf{1}\}_n
\end{align}
such that
\begin{align}
    \sort(\mathbf{x}) = P^\star\mathbf{x}.
\end{align}
Throughout, $\sort$ is in ascending order.
The rows of $P^\star$ are indicator vectors, and the matrix multiplication can be interpreted as doing index selection in parallel.
In this sense, the  permutation matrix $P^\star$ can be interpreted as the generalized \operator{argsort} operator, \ie{}
\begin{align}
    \argsort(\mathbf{x}) = P^\star.
\end{align}
Similarly, the same matrix can be used for ranking, where rank 1 is assigned to the largest element,
\begin{align}
    \rank(\mathbf{x}) = (P^\star)^\top[n,\ldots,1]^\top.
\end{align}
We can now relax the notion of a permutation matrix into a bistochastic matrix, the set of which is also called the Birkhoff polytope
\begin{align}
    \mathbb{B}_n = \{ P_\tau \in \mathbb{R}_+^{n\times n} \mid P_\tau\mathbf{1}_n = \mathbf{1}_n, P_\tau^\top\mathbf{1}_n = \mathbf{1}_n\}.
\end{align}
Note that in practice, for sorting or ranking, we only need $P_\tau$ to be a row- or column-stochastic matrix, respectively.
The following two sections discuss how we can compute soft permutation matrices $P_\tau$ from $\mathbf{x}$, either resorting to regularized optimal transport (OT) or unit simplex projections.
Finally, we will discuss an approach that directly relaxes the \operator{sort} operator via projection onto the permutahedron without relaxing permutation matrices.
In this version of the library, we deliberately choose to focus on deterministic approaches, and do not consider stochastic ones such as~\cite{berthet2020learning,jang2017gumbel,maddison2017gumbel,paulus2020stoachstic}.
\cref{tab:overview-axiswise} provides an overview on the various available methods and regularizations.

\subsection{Optimal Transport-based}
The first family of algorithms is based on optimal transport.
In general, optimal transport describes optimally moving probability mass from one distribution to another distribution while paying minimal cost.
As described by \citet{cuturi2019differentiable}, regularized OT can be used to define soft (arg)sort operators.
The general idea is to mimic a sorting-like behavior by defining two uniform distribution, one on the to-be-sorted input vector $\mathbf{x}\in\mathbb{R}^n$ and one on a set of increasing ``anchor'' points $\mathbf{y}\in\mathbb{R}^n$.
The optimal transport cost matrix is then defined as the squared distance between input and anchor points, which means that probability mass from large values of $\mathbf{x}$ will be moved to large anchor points.
As the anchor points are sorted, this creates a sorting behavior, which is illustrated in \cref{fig:optimal-transport}.

\begin{figure}[t]
    \centering
    \includegraphics[width=0.7\linewidth]{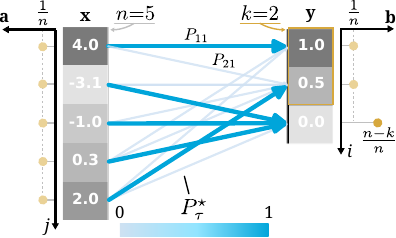}
    \caption{OT-based $\argtopk$ computes a (scaled) optimal transport plan $P^{\star}_{\tau}$ whose entries $P_{ij}^\star$ contain the probability mass transported from the $j$-th entry of $\mathbf{x}$ to the $i$-th entry of the anchor $\mathbf{y}$.}
    \label{fig:optimal-transport}
    \vspace{-4mm}
\end{figure}

\paragraph{OT problem.}
The regularized optimal transport problem between two discrete probability measures $\sum_{i=1}^n a_i \delta_{{x}_i}$, with source vector $\mathbf{x}=[x_1,x_2,\ldots,x_n]^\top$ and probability weight vector $\mathbf{a}=[a_1,a_2,\ldots,a_n]^\top$, and $\sum_{j=1}^m b_j \delta_{{y}_j}$, with target vector $\mathbf{y}=[y_1,y_2,\ldots,y_m]^\top$ and probability weight vector $\mathbf{b}=[b_1,b_2,\ldots,b_m]^\top$, is defined as
\begin{align}
    \Gamma_\tau^\star 
    \coloneqq{}& \argmin_{\Gamma\in U(\mathbf{a},\mathbf{b})} \langle \Gamma, C_{\mathbf{xy}} \rangle + \tau R(\Gamma).
\end{align}
Here, the cost matrix $(C_{\mathbf{xy}})_{ij} = (x_i - y_j)^2$ is computed as the squared differences
and $U$ denotes the transport polytope of matrices summing to the marginals across rows and columns, \ie{}
\begin{align}
    U(\mathbf{a},\mathbf{b})
    &\coloneqq \{ \Gamma \in \mathbb{R}^{n\times m}_+ \mid \Gamma\mathbf{1}_m = \mathbf{a}, \Gamma^\T\mathbf{1}_n = \mathbf{b} \}.
\end{align}
Note that we omit the dependency of the optimal transport plan $\Gamma_\tau^\star$ on $(\mathbf{a},\mathbf{x},\mathbf{b},\mathbf{y})$ for brevity.

We consider the case of an entropic regularizer $R(\Gamma)=\sum_{i,j} \Gamma_{ij} (\log \Gamma_{ij} - 1)$~\citep{cuturi2013lightspeed},
Euclidean regularizer $R(\Gamma)=\frac{1}{2}\sum_{i,j} \Gamma_{ij}^2$~\citep{blondel2018smooth}
and p-norm regularizer $R(\Gamma)=\frac{1}{p}\sum_{i,j} |\Gamma_{ij}|^p$~\citep{paty2020regularized}.
It is well known that the Euclidean regularizer has a sparsity-inducing effect on the gradients; see, \eg{} SparseMax~\cite{martins2016sparsemax}.
As discussed by \citet{sander2023fast} in the context of projections onto the permutahedron, the p-norm regularizer shares this property, but additionally allows for everywhere differentiable relaxations (\cref{thm:pnorm-smoothness}).
Interestingly, for univariate inputs, the closed-form solutions of the entropic and Euclidean OT problem reduce to the corresponding sigmoidal functions used in our soft Heaviside relaxation, as shown in \cref{app:reduction_elementwise}.

\paragraph{OT based operators.}

We set the source marginal to a uniform distribution $\mathbf{a}=\mathbf{1_n}/n$, and specifically for sorting and ranking, we follow \citet{cuturi2019differentiable} by setting uniform $\mathbf{b}=\mathbf{1_n}/n$
and grid-like $\mathbf{y}=\tfrac{1}{n-1}[0,\ldots,n-1]^\T$.

We define the scaled transposed transportation plan $P^\star_\tau \coloneqq n(\Gamma_\tau^\star)^\top$.
Since $\Gamma_\tau^\star\mathbf{1}=\mathbf{a}$ and $(\Gamma_\tau^\star)^\top\mathbf{1}=\mathbf{b}$, each row and column of $P^\star_\tau$ sums to~$1$.
This allows us to interpret $P^\star_\tau$ as a soft permutation matrix.

Moreover, we first standardize and then squash $\mathbf{x}$ with a sigmoid to the range $[0,1]$.
Although this is not strictly necessary, the use of $\mathbf{x}$ and $\mathbf{y}$ in the range $[0,1]$ improves numerical stability and makes the softening parameter independent of the scale of $\mathbf{x}$.
Then we have%
\footnote{
Note, that the multiplication of the soft permutation matrix with $\mathbf{x}$ has close ties to the gating-style definition of the soft \operator{relu} function in~\cref{eq:relu-gating-style}, see \cref{app:gating-integration-axiswise} for details.
}
\begin{align}
    \label{eq:ot-sort}
    \argsort_\tau(\mathbf{x}) &= P_\tau^\star \in \mathbb{R}^{n\times n},
        \\
    \sort_\tau(\mathbf{x}) &= P_\tau^\star\mathbf{x} \in \mathbb{R}^n,
        \\
    \rank_\tau(\mathbf{x}) &= (P_\tau^\star)^\T [n,\ldots, 1]^\T \in \mathbb{R}^n.
\end{align}

To abstract the soft indices away from the user, we offer  several replacements of utility functions for index selection that mirror the behavior of standard hard JAX and PyTorch index selection, \eg{}
\begin{lstlisting}[basicstyle=\ttfamily\scriptsize]
x = jnp.array([0.3, 1.0, -0.5])

ind = jnp.argsort(x) # [2 0 1]
val = jnp.take_along_axis(x, ind) # [-0.5  0.3  1.0]

soft_ind = sj.argsort(x)
    # [[0.07  0.    0.93 ], [...], [...]]
val = sj.take_along_axis(x, soft_ind)
    # [-0.444  0.31   0.936]
\end{lstlisting}

In principle sorting is enough to define many other operators like \operator{max}, \operator{median} and \operator{top-k}, via appropriate selection and combination of the soft-sorted values. 
However, the dimensionality of the cost matrix for an $n$-dimensional vector $\mathbf{x}$ is $n\times n$, which can lead to large memory requirements.

Fortunately, it is possible to strongly reduce the dimensionality, by transporting multiple elements of $\mathbf{x}$ which we are not interested in (\eg{} the bottom $n-k$ elements in \operator{top-k}) to a shared dummy anchor point, thereby reducing the number of anchors.
For top-k, we follow \citet{xie2020differentiable} by setting $\mathbf{b}=[\tfrac{1}{n},\ldots, \tfrac{1}{n}, \tfrac{n-k}{n}]^\top\in\mathbb{R}^{1\times(k+1)}, \mathbf{y}=\tfrac{1}{k}[k,\ldots,1,0]^\top\in\mathbb{R}^{1\times(k+1)}$, then
\begin{align}
    \argtopk_\tau(\mathbf{x}, k) &= (P_\tau^\star)_{1:k} \in \mathbb{R}^{k\times n},
        \\
    \topk_\tau(\mathbf{x}, k) &= (P_\tau^\star)_{1:k}\mathbf{x} \in \mathbb{R}^k.
\end{align}
Note that we can also trivially define an OT-based \operator{max} as $\topk_\tau(\mathbf{x}, k=1)$; and \operator{min} by flipping signs.
For q-quantiles we slightly vary the approach of \citet{cuturi2019differentiable} by setting $k=\lfloor q (n-1) \rfloor$, $\mathbf{b}=[\tfrac{k}{n}, \tfrac{1}{n}, \tfrac{1}{n}, \tfrac{n - k -2}{n}]^\top, \mathbf{y}=\tfrac{1}{3}[0,1,2,3]^\top$, then we can compute the lower $\downarrow$ and upper $\uparrow$ q-quantiles via the second and third entry of $P_\tau^\star$, \ie
\begin{align}
    \quantile_\tau^\downarrow(\mathbf{x},q) &= (P_\tau^\star)_2\mathbf{x} \in \mathbb{R},
        \\
    \quantile_\tau^\uparrow(\mathbf{x},q) &= (P_\tau^\star)_3\mathbf{x} \in \mathbb{R}.
\end{align}
Various quantile definitions can be recovered by appropriately combining the upper and lower quantiles (\eg{} interpolating or using midpoint). We also trivially get the median by evaluating the quantile function with $q=0.5$.

\subsection{Approximate OT: Unit simplex projection-based}
The relaxations based on OT have many desirable properties such as smoothness in the case of an entropic regularizer.
However, solving the OT problem can be memory- and compute-intensive in practice.
We now describe different soft surrogates based on unit-simplex projections, which have a closed-form solution.
The downside is that these relaxations typically still allow for some non-differentiabilities, \eg{} when ties occur.
However, in practice they often perform well and are therefore selected as the default method of choice.

\paragraph{Unit simplex projection.}
The regularized linear program over the simplex, see \cref{eq:simplex-definition}, is defined as
\begin{align}
    \label{eq:simplex-projection}
    \Pi_\tau(\mathbf{x}) 
    \coloneqq{}& \argmax_{\mathbf{p}\in\Delta_n} \langle \mathbf{x}, \mathbf{p} \rangle - \tau R(\mathbf{p}).
\end{align}
We consider Euclidean regularization $R(\mathbf{p})=\tfrac{1}{2}\|\mathbf{p}\|_2^2$, which leads to a Euclidean projection onto the unit simplex and entropic regularization $R(\mathbf{p}) = \sum_j p_j \log p_j$, leading to a closed-form solution via the softmax ${\exp(\mathbf{x}/\tau)}/{\sum_j \exp(x_j/\tau)}$.
Finally, we consider p-norm regularization $R(\mathbf{p}) = \tfrac{1}{p}\sum_j |p_j|^p$, which leads to a Bregman projection~\cite{bregman1967} onto the unit simplex.

\paragraph{SoftSort.}

\begin{figure}
    \centering
    \includegraphics[width=\linewidth]{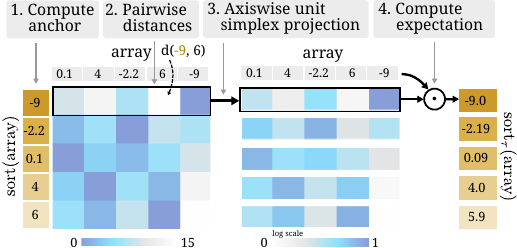}
    \caption{Illustration of the Softsort algorithm.}
    \label{fig:softsort}
    \vspace{-4mm}
\end{figure}

Entropy-regularized OT can be solved efficiently via the Sinkhorn algorithm~\citep{cuturi2013lightspeed}.
In the case of OT for sorting in~\cref{eq:ot-sort}, where both marginals $\mathbf{a}$ and $\mathbf{b}$ are uniform, the Sinkhorn algorithm boils down to alternating row and column softmax-normalization of the cost matrix.
SoftSort~\cite{prillo2020softsort} can be seen as approximating this algorithm by reducing it to a single row-wise softmax normalization via a clever initialization of the anchors.
This corresponds to the entropic regularizer in \cref{eq:simplex-projection}, which we generalize to other regularizers, leading to new variants of SoftSort.

To define the anchors, instead of using a uniform grid over the interval $[0,1]$ as in \cref{eq:ot-sort}, the anchors are constructed by first applying the hard operator to $x$, \eg{} $\mathbf{y}=\sort(\mathbf{x})$.
The cost matrix is the absolute difference between elements of $\mathbf{x}$ and $\mathbf{y}$, which will intuitively assign a low transportation cost between elements of $\sort(\mathbf{x})$ that are close to $\mathbf{x}$.
For the \operator{sort} operator, this leads to
\begin{align}
    \label{eq:softsort}
    \argsort_\tau(\mathbf{x}) \coloneqq \Pi_\tau\Bigl(-|\sort(\mathbf{x})\mathbf{1}_n^\T - \mathbf{1}_n \mathbf{x}^\T|\Bigr).
\end{align}
The projection is applied row-wise.
The computation is illustrated in~\cref{fig:softsort}.
We showcase the output and partial derivatives of the soft sort operation for two different regularizers over a range of $\tau$ in~\cref{fig:sort_comparison}.

Note that the computation is independent over rows, we can therefore independently get soft operators for each of the sorted elements, which trivially gives us a soft (\operator{arg})\operator{topk}/ \operator{quantile}/\operator{max} operator, by replacing the \operator{sort} function in \cref{eq:softsort} with the corresponding operator.
Interestingly, the \operator{argmax} operator reduces to $P_\tau(\mathbf{x})$, which for entropic regularization is simply softmax, thereby recovering the example used in the beginning of the chapter.
For Euclidean regularization, it recovers the SparseMax~\citep{martins2016sparsemax} operator.

Finally, we can also define a novel SoftSort-style \operator{rank} operator.
For this we observe that $\Pi_\tau$ is applied independently to each row of the cost matrix, so the resulting matrix is only row-stochastic and the corresponding argsort operator cannot be transposed to define a soft ranking as we did in the OT case.
Instead, we transpose the cost matrix before applying the unit-simplex projection, which gives distributions over rank indices as
\begin{align}
    \argrank_\tau(\mathbf{x}) \coloneqq \Pi_\tau\Bigl(-|\mathbf{1}_n\sort(\mathbf{x})^\T - \mathbf{x}\mathbf{1}_n^\T|\Bigr).
\end{align}
Note, that all SoftSort-based surrogates are not fully smooth, as the computation involves the hard \operator{sort} operator (non-smooth at ties) and the absolute value function (non-smooth at zero). For this reason, the NeuralSort-based surrogates (which are fully smooth when combined with our soft \operator{abs}) are selected as the default method for \operator{sort}, \operator{rank}, \operator{quantile}, and \operator{median}.
However, for \operator{argmax}, \operator{argmin}, \operator{max}, and \operator{min}, the SoftSort-based method is the default, since it requires only a single simplex projection, giving $O(n)$ complexity compared to NeuralSort's $O(n^2)$, and in \texttt{smooth} mode it reduces to the standard softmax, the canonical soft argmax.

\paragraph{NeuralSort.}
\label{app:neuralsort}
NeuralSort~\citep{grover2019neuralsort} also utilizes unit-simplex projections but originates in a soft relaxation of the \operator{median} operator.
Defining the matrix of soft absolute differences $A_\mathbf{x}^\tau\in\mathbb{R}^{n\times n}$ using our soft \operator{abs} from~\cref{eq:soft-abs} as
\begin{align}
    (A_\mathbf{x}^\tau)_{ij} \coloneqq \operatorname{abs}_\tau(x_i - x_j),
\end{align}
we have that $A_\mathbf{x}^\tau\mathbf{1}_n$ is the vector of sums of soft absolute differences between each element and all other elements.%
\footnote{The original NeuralSort paper~\citep{grover2019neuralsort} uses the hard $|x_i - x_j|$, which introduces gradient discontinuities at ties. By using our soft \operator{abs}, we obtain fully smooth NeuralSort surrogates.}
It is well-known that the median minimizes this sum for $\tau\rightarrow 0^+$, therefore we have
\begin{align}
    \argmedian(x) &\coloneqq \argmin_{\mathbf{p}\in\Delta_n} \langle  A_\mathbf{x}^{0}\mathbf{1}_n, \mathbf{p} \rangle.
\end{align}
By adding a regularizer similar to SoftSort~\cref{eq:simplex-projection}, we get a soft surrogate
\begin{align}
    \argmedian_\tau(x) \coloneqq{}& \argmin_{\mathbf{p}\in\Delta_n} \langle A_\mathbf{x}^\tau\mathbf{1}_n, \mathbf{p} \rangle + \tau R(\mathbf{p}) 
        \\
    ={}& \Pi_\tau( - A_\mathbf{x}^\tau\mathbf{1}_n).
\end{align}

NeuralSort generalizes this idea to the sorting operator, by setting
\begin{align}
    \argsort_\tau(\mathbf{x})_i \coloneqq \Pi_\tau\Bigl((2i - n - 1)\mathbf{x} - A_\mathbf{x}^\tau\mathbf{1}_n\Bigr).
\end{align}
For $\tau\rightarrow 0^+$ this recovers the true argsort operator.
Intuitively, the vector that is projected is a linear combination of the original vector $\mathbf{x}$ and the vector of sums of soft absolute differences, which, when projected onto the unit simplex individually, give an \operator{argmax} and an \operator{argmedian} relaxation, respectively.

This also directly provides relaxations for the \operator{argmax}, \operator{argmin}, \operator{argquantile}$^\uparrow$ and \operator{argquantile}$^\downarrow$ operators by computing only the corresponding elements of the soft sorted array, \ie{} with $c^\uparrow = n\!-\!1-2\lceil q(n\!-\!1)\rceil$ and $c^\downarrow = n\!-\!1-2\lfloor q(n\!-\!1)\rfloor$,
\begin{align}
    \argmax_\tau\nolimits(x) \coloneqq{}&  \Pi_\tau\Bigl( (n - 1)\mathbf{x} - A_\mathbf{x}^\tau\mathbf{1}_n\Bigr)
        \\
    \argmin_\tau\nolimits(x) \coloneqq{}&  \Pi_\tau\Bigl( -(n - 1)\mathbf{x} - A_\mathbf{x}^\tau\mathbf{1}_n\Bigr)
        \\
    \argquantile^\uparrow_\tau\nolimits(x,q) \coloneqq{}&  \Pi_\tau\bigl( -c^\uparrow\mathbf{x} - A_\mathbf{x}^\tau\mathbf{1}_n\bigr)
        \\
    \argquantile^\downarrow_\tau\nolimits(x,q) \coloneqq{}&  \Pi_\tau\bigl( -c^\downarrow\mathbf{x} - A_\mathbf{x}^\tau\mathbf{1}_n\bigr)
\end{align}
Note that the original paper~\citep{grover2019neuralsort} only treats the \texttt{smooth} (entropic) projection; we extend it to \texttt{c0}, \texttt{c1}, and \texttt{c2} regularizers.
As with SoftSort, the NeuralSort argsort matrix is only row-stochastic.
To obtain soft ranks, we normalize each column to sum to one and compute $\rank_\tau(\mathbf{x}) = \widetilde{P}^\top [n,\ldots,1]^\top$, mirroring~\cref{eq:ot-sort}.

\begin{figure}[t]
    \centering
    \includegraphics[width=0.9\linewidth]{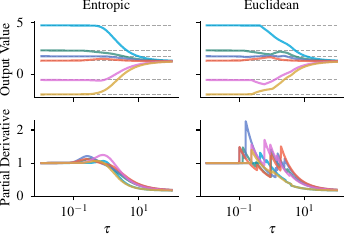}
    \caption{\textbf{Top:} Output of the \operator{sort} operator using ``softsort'' for an array of $n=6$ values as a function of softness $\tau$. Elements converge to the mean value as $\tau \to \infty$. \textbf{Bottom:} Partial derivatives ${\partial{y_j}} / {\partial{x_i}}$. Here, $j$ denotes the sorted position of input element $x_i$. As $\tau \to \infty$, the derivatives converge to $1/n$.}
    \label{fig:sort_comparison}
\end{figure}

\subsection{Permutahedron projection-based}
\label{sec:linear-program-based}

For sorting and ranking, both OT-based and unit-simplex-projection-based approaches involve computations with the $n\times n$ cost matrix. In \softjax, we avoid materializing this matrix by processing rows in chunks via \texttt{lax.scan}, reducing memory from $O(n^2)$ to $O(n)$ while also yielding $O(1)$ XLA compilation time. Nevertheless, for large $n$, the $O(n^2)$ time complexity remains a bottleneck.
\citet{blondel2020fast,sander2023fast} resolve this issue by directly softening the value-space operators, \eg{} \operator{sort} instead of \operator{argsort}.
This is achieved by interpreting these operators as projections onto the permutahedron, the convex hull of permutations of $\mathbf{z}$, \ie{}
\begin{align}
    \mathcal{P}(\mathbf{z}) \coloneqq \text{conv}(\{ \mathbf{z}_\sigma \mid \sigma \in \Sigma \} ).
\end{align}
Here, $\Sigma$ is the set of permutation matrices, see~\cref{eq:permutation-set}.
The Euclidean projection of a vector $\mathbf{y}\in\mathbb{R}^n$ onto the permutahedron of another vector $\mathbf{z}\in\mathbb{R}^n$ is given by
\begin{align}
    \operatorname{Proj}_\tau(\mathbf{y},\mathbf{z})
        \coloneqq{}& \argmax_{\mathbf{p}\in\mathcal{P}(\mathbf{z})} \langle \mathbf{y}, \mathbf{p} \rangle - \frac{\tau}{2}\sum_j p_j^2
        \\
    ={}& \argmin_{\mathbf{p}\in\mathcal{P}(\mathbf{z})} \frac{1}{2}\|\mathbf{p} - \frac{\mathbf{y}}{\tau}\|_2^2.
\end{align}
\citet{blondel2020fast} also propose a variant that uses an entropic regularizer by optimizing over exponentiated inputs and taking the log, \ie{}
\begin{align*}
    \operatorname{Proj}_\tau(\mathbf{y},\mathbf{z}) 
    \coloneqq{}& \log\bigl(\argmax_{\mathbf{p}\in\mathcal{P}(e^\mathbf{z})} \langle \mathbf{y}, \mathbf{p} \rangle - \tau \langle \mathbf{p}, \log \mathbf{p} - 1\rangle\bigr).
\end{align*}

\paragraph{FastSoftSort.}
Intuitively, FastSoftSort~\citep{blondel2020fast} now projects the sorted ranks onto the permutahedron of the input $\mathbf{x}$, which will select the permutation of $\mathbf{x}$ that is closest to being sorted.
In the same way, projecting the (negative) $\mathbf{x}$ onto the permutahedron of the possible ranks selects the ranks that follow the ordering of $\mathbf{x}$.
This gives the surrogate operators
\begin{align}
    \sort_\tau(\mathbf{x}) &= \operatorname{Proj}_\tau([1,\ldots,n],\mathbf{x}),
        \\
    \rank_\tau(\mathbf{x}) &= \operatorname{Proj}_\tau(-\mathbf{x},[1,\ldots,n]).
\end{align}
Other operators like top-k, quantile, median or max can be computed by first running soft sorting, and then selecting the appropriate values.

Crucially, \citet{blondel2020fast} derive an algorithm based on isotonic optimization which solves the projection onto the permutahedron in $O(n\log n)$.
This is much faster than the runtime of Sinkhorn which is $O(Tn^2)$, where $T$ is the number of Sinkhorn iterations.
However, note that this approach does not allow for an \operator{argsort} or \operator{argrank} operator.

\paragraph{SmoothSort.}
In OT and unit simplex projection-based approaches, entropic regularization typically leads to dense Jacobians, whereas Euclidean regularization promotes sparsity, see \eg{} \citep{blondel2018smooth}.
However, in FastSoftSort this is not the case: While Euclidean regularization behaves as usual by promoting sparsity, the entropic regularization behaves very similarly to the Euclidean case by promoting sparsity, as also discussed by \citet{blondel2020fast}.
\looseness=-1

Therefore, we propose a new variant of this surrogate family by adding entropic regularization to a dual formulation of the permutahedron projection; see \cref{app:permutahedron-new-method} for details.
This leads to a surrogate that behaves more like the entropic regularizer in the OT setting, which yields dense Jacobians.
Moreover, by replacing the hard order-statistic bounds in the permutahedron LP with smooth log-sum-exp relaxations (\cref{app:smooth-bounds}), this variant is $\mathcal{C}^\infty$ differentiable, unlike the other FastSoftSort variants which are at most $\mathcal{C}^2$.
Unfortunately, we cannot use the same fast PAV algorithm in this case, because the smooth bounds require $O(n^2)$ preprocessing and the LP is solved via L-BFGS.
However, compared to entropy-regularized OT, the $O(n^2)$ cost is paid only once (for the bounds), not per iteration, and we do not need to materialize an $n\times n$ matrix.

\subsection{Sorting network-based}
\label{sec:sorting-network}

An alternative approach that avoids both the $O(n^2)$ pairwise cost matrix and iterative optimization is to use a differentiable sorting network.
Following \citet{petersen2021differentiable}, who propose replacing the hard compare-and-swap operations in a bitonic sorting network with soft comparisons based on a logistic sigmoid, we implement a bitonic sorting network using our soft Heaviside function $\heaviside_\tau$ (\cref{eq:heaviside_entropic,eq:heaviside_polynomial}) for soft swapping, \ie{}
\begin{align}
    \sigma &= \heaviside_\tau(a - b), \\
    \operatorname{soft\_min}(a,b) &= \sigma \cdot b + (1 - \sigma) \cdot a, \\
    \operatorname{soft\_max}(a,b) &= \sigma \cdot a + (1 - \sigma) \cdot b.
\end{align}
In \texttt{smooth} mode, $\heaviside_\tau$ is the logistic sigmoid, directly recovering the method of \citet{petersen2021differentiable}, while \texttt{c0}, \texttt{c1}, and \texttt{c2} modes yield new sorting network variants with the corresponding smoothness guarantees.
Following \citet{petersen2021differentiable}, we obtain soft permutation matrices by tracking an $n \times n$ matrix $P$ (initialized to the identity) through the network.
At each compare-and-swap step, the rows of $P$ corresponding to the two compared positions are mixed using the weights $\sigma$ and $1 - \sigma$, hence $\operatorname{sort}(\mathbf{x}) = P \mathbf{x}$.
Unlike FastSoftSort, sorting networks can therefore produce both sorted values and soft permutation matrices, enabling \operator{argsort} and \operator{argmax} in addition to \operator{sort}, \operator{max}, \operator{min}, \operator{quantile}, \operator{median}, and \operator{top\_k}.

\section{Benchmarks \& Case Study} \label{sec:benchmarks}

\begin{figure}[t]
    \centering
    \includegraphics[width=\linewidth]{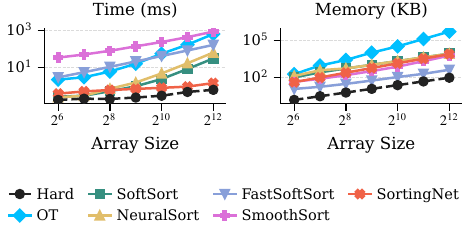}
    \caption{Computation time (\textbf{left}) and peak memory consumption (\textbf{right}) for a full forward-backward pass of \texttt{sj.sort} using \texttt{smooth} regularization. All six soft sorting methods are compared against the hard baseline. Extended results across all modes are in \cref{fig:benchmark_sort_all}.}
    \label{fig:benchmark_1}
    \vspace{-4mm}
\end{figure}

In order to be useful for machine learning practitioners, softened functions and their gradients must be both fast to compute and memory efficient.
\softjax allows us to easily compare these aspects across different methods within a unified framework.
\Cref{fig:benchmark_1} shows the computation time and peak memory of \texttt{sj.sort} for different input sizes and methods (\texttt{smooth} mode) on an Nvidia RTX~3060~GPU.
The sorting network is the fastest soft method at \qty{1.0}{\milli\second} for $n=4096$ (only ${\sim}3.8\times$ the hard baseline), followed by SoftSort (\qty{16}{\milli\second}) and NeuralSort (\qty{37}{\milli\second}).
FastSoftSort is the most memory-efficient method, scaling roughly linearly (\qty{420}{\kilo\byte} at $n=4096$) since it avoids materializing a permutation matrix.
Our novel method SmoothSort and the OT-based surrogate are the slowest out of the testsed methods.
Extensive benchmarking results on all methods and modes for a variety of axis-wise functions are included in Appendix~\ref{app:benchmarks}.

Additionally, we include a case study which demonstrates the ease of using \softjax for softening a subroutine which is commonly used in collision detection, see Appendix~\ref{app:sec:case-study} for details.

\section{Conclusion}
Standard tensor libraries with GPU acceleration and automatic differentiation have enabled differentiable optimization at scale in ML research. 
For many problems in ML and other fields, discrete primitives are needed, but result in uninformative gradients. Differentiable alternatives are scattered across papers, slowing progress and adoption.
This work introduces \softjax and \softtorch: feature-complete, unified and extensible libraries for soft relaxations of hard operators with principled straight-through estimation (STE).
We systematically derive differentiable surrogates for elementwise discrete operators from smooth relaxation of the Heaviside step function. 
Our framework also provides a broad set of axiswise differentiable discrete operators including \operator{argsort}, \operator{argquantile}, and \operator{argtop\_k} -- implemented via  state-of-the-art algorithms based on optimal transport and simplex/permutahedron projections and derive index-selection primitives (e.g., \operator{take\_along\_axis}, \operator{choose}) for end-to-end use.  
Through this standardization, this work derives a broad range of differentiable operators for which an implementation is currently not easily available (see Table~\ref{tab:overview-axiswise}).

Overall, the framework consolidates core components for \emph{soft differentiable programming} into a single, well-tested, feature-complete library, improving reproducibility and lowering the barrier to widespread use of soft differentiable programming in practice.

\section*{Acknowledgements}
We thank the International Max Planck Research School for Intelligent Systems (IMPRS-IS) for supporting Anselm Paulus and Onur Beker.

This work was supported by the ERC - 101045454 REAL-RL and the German Federal Ministry of Education and Research (BMBF) through the T\"ubingen AI Center (FKZ: 01IS18039A). Georg Martius is a member of the Machine Learning Cluster of Excellence, EXC number 2064/1 – Project number 390727645.

The work on this paper was supported by the Czech Science Foundation (GA\v{C}R) grant no.~26-23981S.

\section*{Impact Statement} 
This paper presents work whose goal is to advance the field of Machine Learning. There are many potential societal consequences of our work, none which we feel must be specifically highlighted here.

\bibliography{bibliography}
\bibliographystyle{icml2026_titleurl}

\newpage
\appendix
\onecolumn
\raggedbottom

\section{Library overview}
\begin{minipage}{1.0\textwidth}
\begin{center}
    \footnotesize
    \setlength{\tabcolsep}{4pt}
    \begin{tabular}{@{}lllll@{}}
    \toprule
    \textbf{Axiswise} & \textbf{Elementwise} & \textbf{Logical} & \textbf{Selection} & \textbf{Modes} \\
    \midrule
    \texttt{argmin}      & \texttt{heaviside}   & \texttt{logical\_and} & \texttt{where}                   & \checkmark~\texttt{smooth} \\
    \texttt{min}         & \texttt{round}       & \texttt{logical\_or}  & \texttt{take\_along\_axis}       & \checkmark~\texttt{c0} \\
    \texttt{argsort}     & \texttt{sign}        & \texttt{logical\_xor} & \texttt{take}                    & \checkmark~\texttt{c1} \\
    \texttt{sort}        & \texttt{abs}         & \texttt{logical\_not} & \texttt{choose}                  & \checkmark~\texttt{c2} \\
    \texttt{argquantile} & \texttt{relu}        & \texttt{any}          & \texttt{dynamic\_index\_in\_dim} & \\
    \texttt{quantile}    & \texttt{clip}        & \texttt{all}          & \texttt{dynamic\_slice\_in\_dim} & \textbf{Methods} \\
    \texttt{argmedian}   & \texttt{less}        &                       & \texttt{dynamic\_slice}          & \checkmark~Optimal Transport \\
    \texttt{median}      & \texttt{less\_equal} &                       &                                  & \checkmark~SoftSort \\
    \texttt{top\_k}      & \texttt{equal}       &                       &                                  & \checkmark~NeuralSort \\
    \texttt{rank}        & \texttt{not\_equal}  &                       &                                  & \checkmark~FastSoftSort \\
                         & \texttt{isclose}     &                       &                                  & \checkmark~SmoothSort \\
                         &                      &                       &                                  & \checkmark~Sorting Network \\
    \bottomrule
    \end{tabular}
    \captionof{figure}{Overview of all operators implemented in \softjax{}. Redundant operators such as \operator{max} or \operator{greater} have been omitted for brevity.
    Additionally, \softjax provides autograd-safe wrappers for common mathematical functions whose standard implementations produce NaN gradients at boundary points: \operator{sqrt}, \operator{arcsin}, \operator{arccos}, \operator{div}, \operator{log}, and \operator{norm}. These wrappers clamp gradients near singularities, making them safe for use in differentiable pipelines.
    \softtorch{} provides the same operators using PyTorch naming conventions; see \cref{tab:softtorch_naming} for the mapping.
    }
    \label{fig:function_overview}
    \end{center}
\end{minipage}

\begin{table}[H]
\centering
\footnotesize
\caption{Naming differences between \softjax{} and \softtorch{}. \softtorch{} follows PyTorch conventions. Functions listed as ``---'' are not available in \softtorch{} (no PyTorch equivalent).}
\label{tab:softtorch_naming}
\begin{tabular}{@{}ll@{}}
\toprule
\textbf{\softjax} & \textbf{\softtorch} \\
\midrule
\texttt{clip}                        & \texttt{clamp} \\
\texttt{equal}                       & \texttt{eq} \\
\texttt{top\_k}                      & \texttt{topk} \\
\texttt{take\_along\_axis}           & \texttt{take\_along\_dim} \\
\texttt{dynamic\_index\_in\_dim}     & \texttt{index\_select} \\
\texttt{dynamic\_slice\_in\_dim}     & \texttt{narrow} \\
\texttt{choose}                      & --- \\
\texttt{dynamic\_slice}              & --- \\
\bottomrule
\end{tabular}
\end{table}

\begin{minipage}{1.0\textwidth}
\begin{center}
\captionof{table}{Overview of algorithms used. While not all regularization methods are detailed in the literature for the various algorithm types, we derived the missing combinations by systematically combining the above-listed works.} \label{tab:overview-axiswise}
\vspace{2mm}
\begin{tabular}{l|lll}
\hline
 & \multicolumn{3}{c}{\textbf{Optimal transport}} \\
\textbf{} & \textbf{\texttt{smooth}} & \textbf{\texttt{c0}} \cite{blondel2018smooth} & \textbf{\texttt{c1}/\texttt{c2}} \cite{paty2020regularized}
    \\ \hline
argmax & \cite{cuturi2019differentiable} & derived & derived \\
argsort & \cite{cuturi2019differentiable} & derived & derived \\
argquantile & \cite{cuturi2019differentiable} & derived & derived \\
argmedian & \cite{cuturi2019differentiable} & derived & derived \\
argtop\_k & \cite{xie2020differentiable} & derived & derived \\
rank & \cite{cuturi2019differentiable} & derived & derived \\
    \hline
 & \multicolumn{3}{c}{\textbf{SoftSort}} \\
\textbf{} & \textbf{\texttt{smooth}} & \textbf{\texttt{c0}} & \textbf{\texttt{c1}/\texttt{c2}} \\
    \hline
argmax & softmax & \cite{martins2016sparsemax} & derived \\
argsort & \cite{prillo2020softsort} & derived & derived \\
argquantile & \cite{prillo2020softsort} & derived & derived \\
argmedian & \cite{prillo2020softsort} & derived & derived \\
argtop\_k & \cite{prillo2020softsort} & derived & derived \\
rank & derived & derived & derived \\
    \hline
 & \multicolumn{3}{c}{\textbf{NeuralSort}} \\
\textbf{} & \textbf{\texttt{smooth}} & \textbf{\texttt{c0}} & \textbf{\texttt{c1}/\texttt{c2}} \\
    \hline
argmax & \cite{grover2019neuralsort} & derived & derived \\
argsort & \cite{grover2019neuralsort} & derived & derived \\
argquantile & \cite{grover2019neuralsort} & derived & derived \\
argmedian & \cite{grover2019neuralsort} & derived & derived \\
argtop\_k & \cite{grover2019neuralsort} & derived & derived \\
rank & derived & derived & derived \\
    \hline
 & \multicolumn{3}{c}{\textbf{FastSoftSort}} \\
\textbf{} & \textbf{\texttt{smooth}} & \textbf{\texttt{c0}} & \textbf{\texttt{c1}/\texttt{c2}} \\
    \hline
sort & \cite{blondel2020fast} & \cite{blondel2020fast} & \cite{sander2023fast} \\
rank & \cite{blondel2020fast} & \cite{blondel2020fast} & \cite{sander2023fast} \\
    \hline
 & \multicolumn{3}{c}{\textbf{SmoothSort}} \\
\textbf{} & \textbf{\texttt{smooth}} & \textbf{\texttt{c0}} & \textbf{\texttt{c1}/\texttt{c2}} \\
    \hline
sort & novel & --- & --- \\
rank & novel & --- & --- \\
    \hline
 & \multicolumn{3}{c}{\textbf{Sorting network}} \\
\textbf{} & \textbf{\texttt{smooth}} & \textbf{\texttt{c0}} & \textbf{\texttt{c1}/\texttt{c2}} \\
    \hline
sort & \cite{petersen2021differentiable} & derived & derived \\
argsort & \cite{petersen2021differentiable} & derived & derived \\
    \hline
\end{tabular}
\end{center}
\end{minipage}

\section{Case study: Collision detection in MuJoCo XLA}
\label{app:sec:case-study}
Mujoco \cite{todorov2012mujoco} is one of the most widely used simulators throughout robotics research. MuJoCo XLA (MJX) is a re-implementation of MuJoCo using JAX to enable GPU-parallelized automatic differentiation of robot simulations. Unfortunately, neither MuJoCo nor MJX are softly differentiable as emphasized by \citet{paulus2025diffmjx}. To improve differentiability, \citet{paulus2025diffmjx} introduced soft relaxations  of MuJoCo's collision detection algorithms for simple primitives, \ie~sphere-plane and box-plane collisions. 
However, softening MuJoCo's collision detection for mesh-mesh collisions is significantly more challenging as it involves numerous nested combinations of non-differentiable discrete operations. 
Alternatively, \citet{beker2025ssdf, beker2026contax} propose efficient soft collision detection via signed distance fields. 
To illustrate the utility of \softjax, we soften one of the key subroutines in MJX collision detection. The \href{https://github.com/google-deepmind/mujoco/blob/main/mjx/mujoco/mjx/_src/collision_convex.py#L112}{subroutine} as shown in \Cref{fig:collision_detection} (left) chooses four vertices (A, B, C, D) from an ordered polygon that roughly maximize the contact patch area.

\begin{figure}[htbp]
    \centering
    \includegraphics[width=0.47\textwidth]{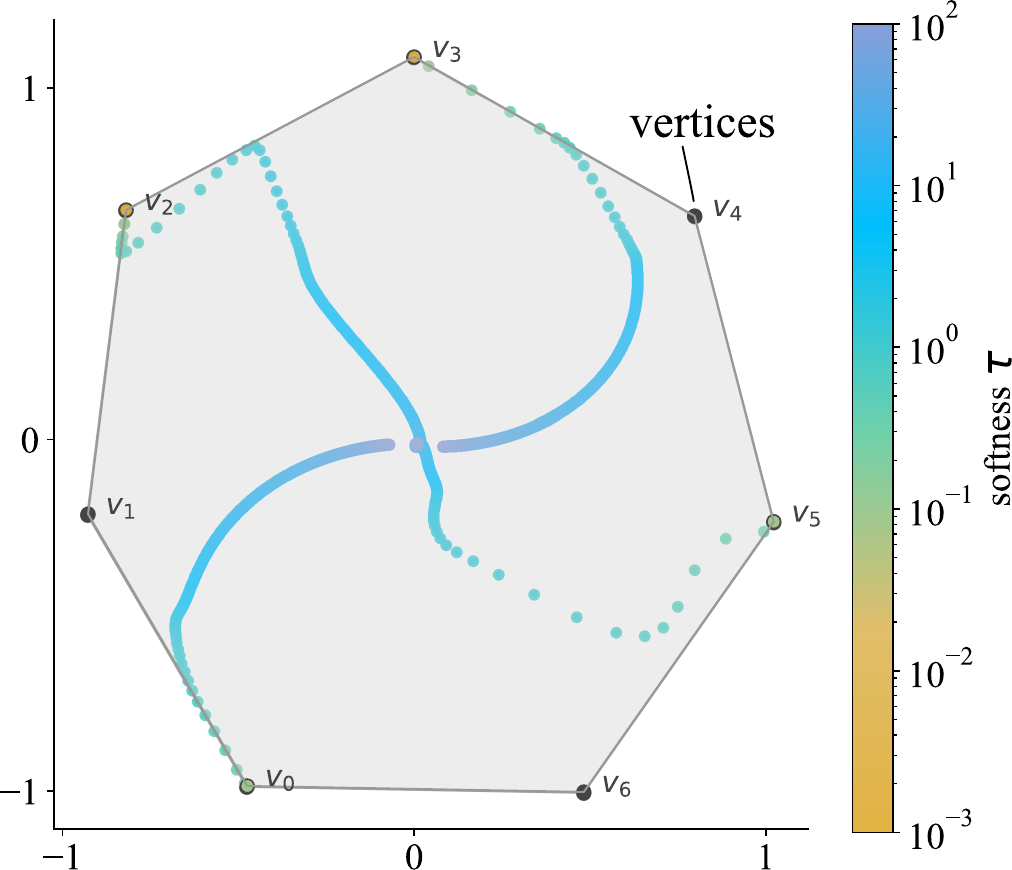}
    \caption{Manifold points computed by the algorithm's SoftJAX reimplementation over different softness values $\tau$. While a small softness causes the algorithm to choose points close to the vertices of the polygon, for large softness values, all four points collapse into a single point.}
    \label{fig:manifold_colorbar}
\end{figure}

To render this algorithm smoothly differentiable, we convert it to SoftJax as shown in \cref{fig:collision_detection} (right). For this, the following changes are made:
\begin{enumerate}[left=3pt]
    \item Replace discrete \texttt{jnp.argmax} with \texttt{sj.argmax}, which returns a ``SoftIndex'' distribution,
    \item Replace indexing with \texttt{sj.dynamic\_index\_in\_dim}, which uses the ``SoftIndex'' as input,
    \item Replace \texttt{jnp.abs} with \texttt{sj.abs}.
\end{enumerate}
Note that the soft version returns ``SoftIndex'' distributions (shape: 4 x n) instead of the hard indices. The remaining code of the collision detection, therefore, needs to be adjusted accordingly. 

\paragraph{Effect of smoothing on point selection and gradients.}
\Cref{fig:manifold_colorbar} illustrates how the soft algorithm chooses four points from a seven-sided polygon as the softness parameter varies (computed as \texttt{soft\_probs @ poly}).
For small softness values, the soft selection closely matches the hard algorithm. 
As softness increases, the selected points meet in a single point as the \texttt{sj.argmax} reduces to the computation of a mean, as also shown in \cref{fig:manifold_plots} (Left).

A key motivation for softening is to obtain informative gradients with respect to all function inputs. As shown in \cref{fig:manifold_plots} (Right), MJX's hard version of manifold point selection always chooses four points out of the polygon's five vertices such that for one polygon point, the gradient is null. In comparison, the algorithm's soft re-implementation yields smooth, non-zero gradients at all vertices of the polygon (here deploying optimal transport with \texttt{smooth} regularization). 
Small softness values (\eg~$\tau=0.01$) cause the algorithm to select points similarly to its hard counterpart at the expense of increased gradient magnitudes (note the logarithmic y-axis). 
When small softness values are required in an application, we recommend to use gradient clipping to improve numerical stability.

\paragraph{Avoiding the STE pitfall.}
We could now directly use one of the relaxations as a differentiable proxy.
However, oftentimes it is desirable not to alter the forward pass. For instance, in a physics simulation, we do not want to alter the forward physics. In these cases, we can resort to the straight-through trick, which means to replace only the gradient of a hard function on the forward pass with the gradient of the soft function on the backward pass.
To avoid the STE pitfall, the straight-through trick is applied on the outer level of the downstream function, which calls the whole function twice instead of each of the primitives:
\begin{lstlisting}[basicstyle=\ttfamily\scriptsize]
from softjax.straight_through import st

manifold_points_softjax_st = st(manifold_points_softjax)
\end{lstlisting}

\begin{figure}[htbp]
\vspace{-2mm}
    \centering
    \includegraphics[width=0.42\linewidth]{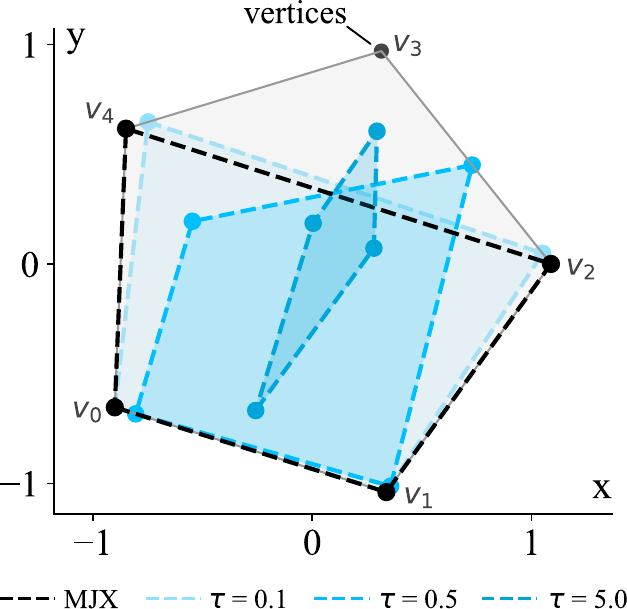}
    \includegraphics[width=0.57\linewidth, trim={0 5mm 0 0}, clip]{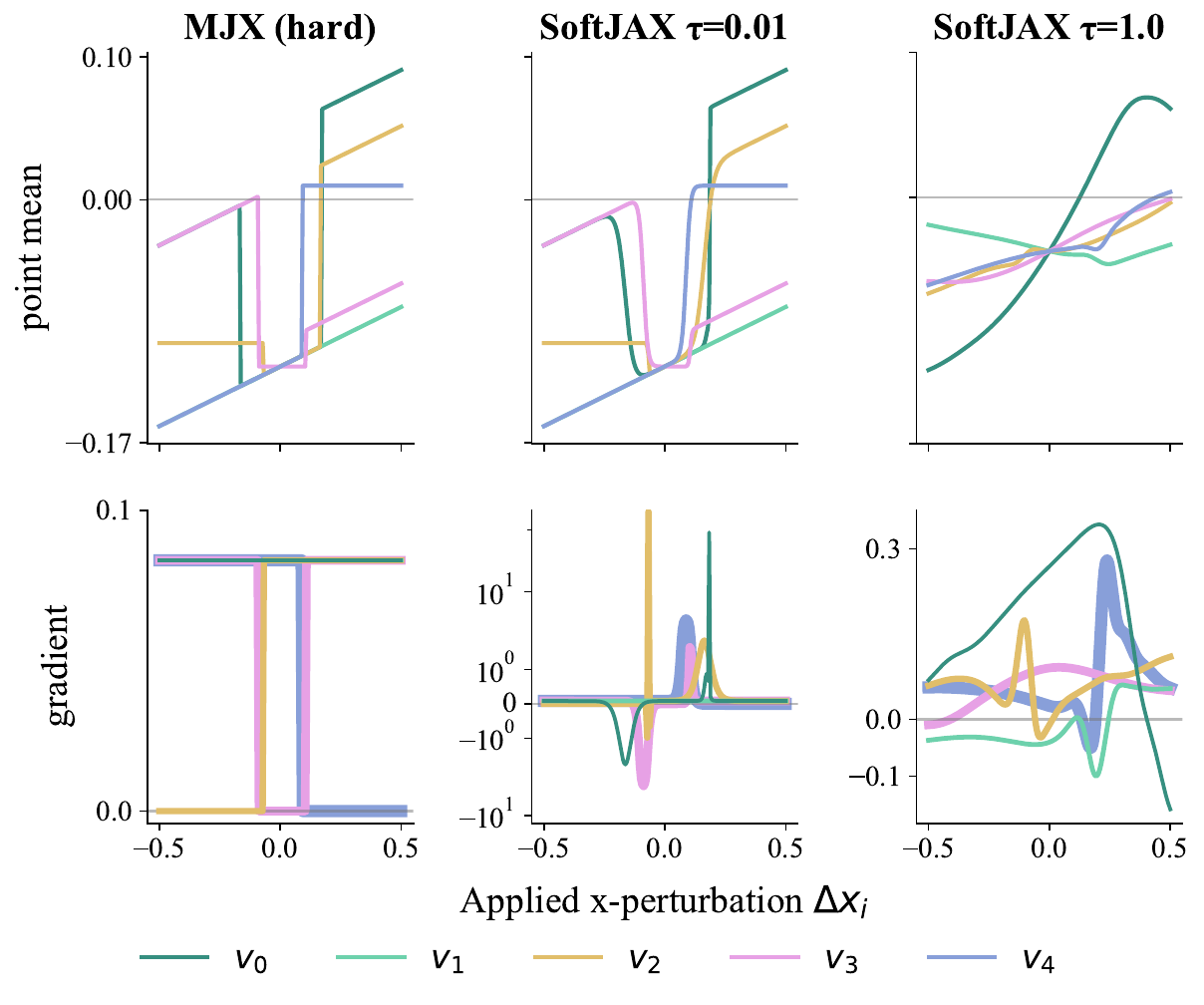}
    \caption{\textbf{Left:} The \texttt{manifold\_points\_mjx} algorithm selects the four vertices of a polygon (here $n=5$) that maximize the enclosed area. In contrast, \texttt{manifold\_points\_softjax} selects the four points by computing an expectation using probability distributions over vertex indices. For small softness $\tau=0.1$, \softjax selects points similar to MJX. \textbf{Right:} Perturbing the x-position of an individual vertex $v_i$ by $\Delta x_i$ changes the mean over all coordinates of the selected points. Notably, in the mode smooth with $\tau=0.1$ and $\tau=1$, the \softjax variant yields smooth gradients of the mean with respect to $\Delta x_i$. By contrast, MJX yields non-zero gradients solely for the vertices that have currently been selected by the algorithm.
    }
    \label{fig:manifold_plots}
\end{figure}

\begin{center}
\begin{minipage}{1.0\textwidth}
\vspace{-4mm}
    \centering
    \tiny
    \begin{minipage}[t]{0.49\linewidth}
\begin{lstlisting}[basicstyle=\ttfamily\scriptsize, showlines=true]
def manifold_points_mjx(poly, poly_mask, poly_norm: jnp.ndarray):

    dist_mask = jnp.where(poly_mask, 0.0, -1e6)
    # Note: We add a small tie-breaker
    # A: select the most penetrating vertex
    a_logits = dist_mask - 0.1 * jnp.arange(poly.shape[0])
    a_idx = jnp.argmax(a_logits)
    a = poly[a_idx]
    
    
    # B: farthest from A (largest squared distance)
    b_logits = ((a - poly) ** 2).sum(axis=1) + dist_mask
    b_idx = jnp.argmax(b_logits)
    b = poly[b_idx]
    
    
    # C: farthest from the AB line within the plane
    ab = jnp.cross(poly_norm, a - b)
    ap = a - poly
    c_logits = jnp.abs(ap.dot(ab)) + dist_mask
    
    c_idx = jnp.argmax(c_logits)
    c = poly[c_idx]
    
    
    # D: farthest from edges AC and BC
    ac = jnp.cross(poly_norm, a - c)
    bc = jnp.cross(poly_norm, b - c)
    bp = b - poly
    dist_bp = jnp.abs(bp.dot(bc)) + dist_mask
    
    dist_ap = jnp.abs(ap.dot(ac)) + dist_mask
    
    d_logits = dist_bp + dist_ap
    d_idx = jnp.argmax(d_logits)
    return jnp.array([a_idx, b_idx, c_idx, d_idx])
    # Returns 4 vertex indices
\end{lstlisting}
    \end{minipage}%
    \hfill %
    \begin{minipage}[t]{0.49\linewidth}
\begin{lstlisting}[basicstyle=\ttfamily\scriptsize]
def manifold_points_softjax(poly, poly_mask, poly_norm, softness: float = 0.1, mode: Literal["hard", "smooth"] = "smooth"):
    m, s = mode, softness
    dist_mask = jnp.where(poly_mask, 0.0, -1e6)

    # A: soft argmax over masked distances
    a_logits = dist_mask - 0.1 * jnp.arange(poly.shape[0])
    a_idx = sj.argmax(a_logits, mode=m, softness=s)
    a = sj.dynamic_index_in_dim(poly, a_idx, axis=0, keepdims=False)
    
    # B: soft argmax of distance from A
    b_logits = (((a - poly) ** 2).sum(axis=1)) + dist_mask
    b_idx = sj.argmax(b_logits, mode=m, softness=s)
    b = sj.dynamic_index_in_dim(poly, b_idx, axis=0, keepdims=False)
    
    # C: soft argmax farthest from AB line
    ab = jnp.cross(poly_norm, a - b)
    ap = a - poly
    c_logits = sj.abs(ap.dot(ab), mode=m, softness=s) + dist_mask
    c_idx = sj.argmax(c_logits, mode=m, softness=s)
    c = sj.dynamic_index_in_dim(poly, c_idx, axis=0, keepdims=False)
    
    # D: softargmax farthest from AC and BC
    ac = jnp.cross(poly_norm, a - c)
    bc = jnp.cross(poly_norm, b - c)
    bp = b - poly
    dist_bp = sj.abs(bp.dot(bc), mode=m, softness=s) + dist_mask
    dist_ap = sj.abs(ap.dot(ac), mode=m, softness=s) + dist_mask
    d_logits = dist_bp + dist_ap
    d_idx = sj.argmax(d_logits, mode=m, softness=s)
    return jnp.stack([a_idx, b_idx, c_idx, d_idx])
\end{lstlisting}
    \end{minipage}%
    \normalsize
    \vspace{-2mm}
    \captionof{figure}{\textbf{Left:} Non-differentiable algorithm used by MJX for collision detection between mesh-mesh collisions. The algorithm chooses four vertices from a polygon that form a quadrilateral with approximately largest area. \textbf{Right:} Softened version of MJX's manifold point algorithm yielding informative gradients and returning four ``SoftIndex'' distributions.}
    \label{fig:collision_detection}
\end{minipage}
\end{center}

\section{Input standardization and squashing}
\label{app:sec:standardize-squash}

All axiswise operators in \softjax and \softtorch optionally preprocess the input $\mathbf{x}\in\mathbb{R}^n$ via a \emph{standardize-and-squash} transform, following \citet{cuturi2019differentiable}.
This serves two purposes: (i)~it maps $\mathbf{x}$ into $[0,1]$, matching the range of the grid-like target marginal $\mathbf{y}=\tfrac{1}{n-1}[0,\ldots,n{-}1]^\T$ used in optimal transport, improving numerical stability; and (ii)~it makes the softness parameter~$\tau$ independent of the scale of~$\mathbf{x}$.

\paragraph{Forward transform.}
Given the input $\mathbf{x}$, we first standardize and then squash with the logistic sigmoid~$\sigma$, \ie{}
\begin{align}
    \label{eq:standardize-squash}
    \mu &= \operatorname{mean}(\mathbf{x}), \quad
    s = \operatorname{std}(\mathbf{x}), \quad
    \widetilde{\mathbf{x}} = \sigma\!\left(\frac{\mathbf{x} - \mu}{s}\right),
\end{align}
where $\mu$ and $s$ are computed along the specified axis.
The transformed input $\widetilde{\mathbf{x}} \in (0,1)^n$ is then passed to the axiswise operator.

\paragraph{Inverse transform.}
For value-returning operators (\eg~$\sort$), the output $\widetilde{\mathbf{y}}\in(0,1)^n$ is then mapped back to the original scale.
For this we apply the inverse, by first unsquashing via the logit function, and then destandardizing, \ie{}
\begin{align}
    \label{eq:unsquash-destandardize}
    \mathbf{y} = \operatorname{logit}(\widetilde{\mathbf{y}}) \cdot s + \mu, \quad \text{where} \quad \operatorname{logit}(p) = \log\!\tfrac{p}{1-p}.
\end{align}
The exception is SmoothSort (\cref{sec:linear-program-based}), which skips standardize-and-squash.
This is because it is the only method which can return values larger than the largest input element (or smaller than the smalles input element) as the output is not a convex combination of inputs.
Hence with standardize-and-squash, the output would not always be in the interval $(0,1)$, which can lead to undefined outputs of the unsquashing operation.

\section{Extended Axiswise Operators}

\subsection{Solving Optimization Problems}

\paragraph{Optimal transport.}
In \softjax, for \texttt{smooth}-mode OT, we use Sinkhorn iterations implemented via ott-jax~\citep{cuturi2022jaxott} by default.
Alternatively, L-BFGS as suggested in \cite{blondel2018smooth}, implemented via the Optimistix library~\citep{rader2024optimistix}, can be used.
For \texttt{c0}, \texttt{c1}, and \texttt{c2} modes, we solve the dual OT problems (see \cref{app:dual_ot_problems} for their formulations) using L-BFGS via Optimistix.
Both libraries support implicit differentiation, which reduces memory usage and enables adaptive stopping criteria for optimization loops.
In \softtorch, we solve the optimal transport problems using POT~\citep{flamary2021pot,flamary2024pot}.

\paragraph{Projection onto the unit simplex.}
In \texttt{smooth} mode, this reduces to a simple call to a softmax function.
In \texttt{c0} mode, we use the well-known $O(n \log n)$ algorithm of \cite{wang2013projection}.
For \texttt{c1} and \texttt{c2} modes, we derive closed-form solutions to the Bregman projection onto the unit simplex~\citep{bregman1967}, namely a quadratic formula for $p=3/2$ (\texttt{c1}) and Cardano's method for $p=4/3$ (\texttt{c2}), both running in $O(n \log n)$.

\paragraph{Projection onto the permutahedron.}
We follow \citep{blondel2020fast}, who reduce the Euclidean and log-KL projections onto the permutahedron to isotonic optimization, which is in turn solved in $O(n\log n)$ via the pool adjacent violators (PAV) algorithm.

For \softjax, we reimplement the PAV algorithm in pure JAX, as the original NumPy code is incompatible with \texttt{jax.vmap}/\texttt{jax.jit}.
In \softtorch, we rely Numba-JIT for acceleration.

Moreover, we extend FastSoftSort to $p$-norm regularization following \citet{sander2023fast}.
Note, that \citet{sander2023fast} solve the PAV subproblem via bisection, we replace this with closed-form solutions to a quadratic formula for \texttt{c1} and Cardano's method for \texttt{c2}.

For SmoothSort (\cref{sec:linear-program-based})  we solve the dual of the entropically-regularized LP via L-BFGS~\citep{rader2024optimistix}.
A custom VJP on the solver computes gradients via an adjoint saddle-point system.
Note that SmoothSort is currently \softjax only.

\subsection{Dual OT Problems}
\label{app:dual_ot_problems}

For marginals $\mathbf{a}\in\Delta_n$, $\mathbf{b}\in\Delta_m$, cost matrix $C\in\mathbb{R}^{n\times m}$, and dual potentials $\mathbf{f}\in\mathbb{R}^n$, $\mathbf{g}\in\mathbb{R}^m$, the dual of the entropy-regularized OT problem can be written as
\begin{align}
    \max_{\mathbf{f}\in\mathbb{R}^n,\mathbf{g}\in\mathbb{R}^m} \langle \mathbf{f}, \mathbf{a} \rangle + \langle \mathbf{g}, \mathbf{b} \rangle - \tau \sum_{ij} \exp\Bigl(\frac{f_i+g_j-C_{ij}}{\tau}\Bigr).
\end{align}
and the corresponding optimal primal solution can be recovered via
\begin{align}
    (\Gamma_\tau^\star)_{ij} = \exp\bigl(\frac{f_i^\star + g_j^\star -C_{ij}}{\tau}\bigr).
\end{align}

The dual of the Euclidean regularized OT problem can be written as
\begin{align}
    \max_{\mathbf{f}\in\mathbb{R}^n,\mathbf{g}\in\mathbb{R}^m} \langle \mathbf{f}, \mathbf{a} \rangle + \langle \mathbf{g}, \mathbf{b} \rangle - \frac{1}{2\tau} \sum_{ij} \Bigl(f_i+g_j-C_{ij}\Bigr)_+^2
\end{align}
and the corresponding optimal primal solution can be recovered via
\begin{align}
    (\Gamma_\tau^\star)_{ij} = \frac{1}{\tau}\bigl(f_i^\star + g_j^\star -C_{ij}\bigr)_+.
\end{align}

The dual of the p-norm regularized OT problem can be written as
\begin{align}
    \max_{\mathbf{f}\in\mathbb{R}^n,\mathbf{g}\in\mathbb{R}^m} \langle \mathbf{f}, \mathbf{a} \rangle + \langle \mathbf{g}, \mathbf{b} \rangle - \frac{\tau^{1-q}}{q} \sum_{ij} \bigl(f_i+g_j-C_{ij}\bigr)_+^q,
\end{align}
where $q=\frac{p}{p-1}$. The corresponding optimal primal solution can be recovered via
\begin{align}
    (\Gamma_\tau^\star)_{ij} = \Bigl(\frac{f_i^\star + g_j^\star -C_{ij}}{\tau}\Bigr)_+^{\frac{1}{p-1}}.
\end{align}

\subsection{Elementwise Operators as Special Case}
\label{app:reduction_elementwise}

While elementwise and axiswise soft operators are treated as distinct categories in the main text, they are closely related.
Specifically, the elementwise soft surrogates described in \cref{sec:elementwise} can be recovered as special cases of the axiswise operators when $n=2$.
Here, we demonstrate that applying axiswise operators to a two-element vector $\mathbf{x}=[0, x]^\top$ yields the soft Heaviside functions derived earlier.
To distinguish the mode-specific sigmoidals in this section, we introduce superscripts for the smoothness class.
The \texttt{smooth} mode is $\heaviside_\tau^\infty(x) \coloneqq \sigma(x/\tau) = e^{x/\tau}/(1+e^{x/\tau})$.
All piecewise modes share the structure
\begin{align}
    \heaviside_\tau^k(x) \coloneqq \begin{cases} 0 & \text{if } x < -\tau, \\ g^k(x/\tau) & \text{if } |x| \leq \tau, \\ 1 & \text{if } x > \tau, \end{cases}
\end{align}
with interpolation functions $g^k\colon [-1, 1] \to [0, 1]$
\begin{alignat}{2}
    g^0(s) &= \tfrac{1}{2} + \tfrac{s}{2}, &\qquad& \text{(\texttt{c0}, piecewise linear, $\mathcal{C}^0$)} \\
    g^1(s) &= \tfrac{1}{2} + \tfrac{3s}{4} - \tfrac{s^3}{4}, && \text{(\texttt{c1}, Hermite cubic, $\mathcal{C}^1$)} \\
    g^2(s) &= \tfrac{1}{2} + \tfrac{15s}{16} - \tfrac{5s^3}{8} + \tfrac{3s^5}{16}. && \text{(\texttt{c2}, Hermite quintic, $\mathcal{C}^2$)}
\end{alignat}
The $p$-norm simplex projections at $n\!=\!2$ yield distinct $\mathcal{C}^1$ and $\mathcal{C}^2$ sigmoidals, which we denote $\widetilde\heaviside_\tau^1$, $\widetilde\heaviside_\tau^2$ (same piecewise structure with $\widetilde g^k$ replacing $g^k$)
\begin{alignat}{2}
    \widetilde g^1(s) &= \frac{\bigl(s + \sqrt{2 - s^2}\bigr)^2}{4}, &\qquad& \text{($p\!=\!3/2$ p-norm, $\mathcal{C}^1$)} \\
    \widetilde g^2(s) &= (t + s/2)^3, \quad t = -|s|\sinh\!\bigl(\tfrac{1}{3}\operatorname{arcsinh}\!\bigl(\tfrac{-2}{s^2|s|}\bigr)\bigr). && \text{($p\!=\!4/3$ p-norm, $\mathcal{C}^2$)}
\end{alignat}
These smoothness classes are confirmed for general $n$ in \cref{thm:pnorm-smoothness}.

\paragraph{Reduction of Soft Argmax}
Consider the soft \operator{argmax} operator applied to $\mathbf{x}=[0, x]^\top$. The output is a probability distribution $\mathbf{p} \in \Delta_2$ over the indices.
Since $p_0 + p_1 = 1$, the distribution is fully characterized by $p_1$, which is a sigmoidal function of $x/\tau$ for each regularizer, \ie{}
\begin{align}
    \argmax_\tau\nolimits([0, x]^\top) = \bigl[1 - p_1,\, p_1\bigr]^\top.
\end{align}
For entropic regularization, $\argmax$ reduces to the standard softmax, and hence we recover the \operator{sigmoid}
\begin{align}
    p_1 = \sigma(x/\tau) = \heaviside_\tau^\infty(x).
\end{align}
For $p$-norm regularization, the projection of $[0, x]^\top$ onto the unit simplex solves
$\argmin_{\mathbf{p}\in\Delta_2} -\langle [0, x]^\top, \mathbf{p}\rangle + \frac{\tau}{p}\sum_i p_i^p$.

With $p=2$ (\texttt{c0} mode), the first-order condition gives
\begin{align}
    p_1 = \tfrac{1}{2} + \tfrac{x}{2\tau} = \heaviside_\tau^0(x),
\end{align}
directly recovering the elementwise \texttt{c0} sigmoidal.

With $p=3/2$ (\texttt{c1} mode), we get the first-order condition $\sqrt{p_1} - \sqrt{1{-}p_1} = x/\tau$, which can be solved via the quadratic formula by $p_1 = \widetilde\heaviside_\tau^1(x)$.
This is another sigmoidal function in $\mathcal{C}^1$ with $p_1' = 0$ at $x = \pm\tau$.

With $p=4/3$ (\texttt{c2} mode), we get the first-order condition $p_1^{1/3} - (1{-}p_1)^{1/3} = x/\tau$, which reduces to a depressed cubic.
That can be solved by Cardano's formula, giving $p_1 = \widetilde\heaviside_\tau^2(x)$.
This is yet another sigmoidal function in $\mathcal{C}^2$.
Note that $\heaviside^1$, $\heaviside^2$ are not special cases of an axiswise operator, but provide the same smoothness guarantees with simpler expressions.
\cref{thm:pnorm-smoothness} confirms that the $\mathcal{C}^1$ and $\mathcal{C}^2$ smoothness classes of $\widetilde\heaviside_\tau^1$ and $\widetilde\heaviside_\tau^2$ extend to the general $n$-dimensional simplex and transport polytope projections.

\paragraph{Reduction of Soft Sort and OT}
Similarly, we can analyze the behavior of soft sorting and optimal transport for $n=2$.
Let the input be $\mathbf{x}=[0, x]^\top$ with sorted anchors $\mathbf{y}=[0, 1]^\top$ and balanced marginals $\mathbf{a}=\mathbf{b}=[\tfrac{1}{2}, \tfrac{1}{2}]^\top$.
The cost matrix is denoted as $C_{ij} = (x_i - y_j)^2$.
The OT solution then takes the form
\begin{align}
    P_\tau^\star =
        \begin{pmatrix}
            p_1 & 1 - p_1 \\
            1 - p_1 & p_1
        \end{pmatrix},
        \qquad \Gamma = P/2,
\end{align}
which only leaves one degree of freedom $p_1\in[0,1]$.
The cost simplifies to $\langle C, \Gamma \rangle = \tfrac{1+x^2}{2} - xp_1$,
hence the regularized OT problem reduces to a scalar optimization in $p_1$.

For entropy-regularized OT, the first-order condition $\log\bigl(p_1/(1{-}p_1)\bigr)=x/\tau$ gives
\begin{align}
    p_1 = \sigma(x/\tau) = \heaviside_\tau^\infty(x),
\end{align}
recovering the logistic sigmoid, identical to the simplex projection in the previous paragraph.
For Euclidean regularization ($p\!=\!2$), the first-order condition yields
\begin{align}
    p_1 = \tfrac{1}{2} + x/\tau = \heaviside_{\tau/2}^0(x),
\end{align}
matching the simplex result $\heaviside_\tau^0(x)$ up to a factor of 2 in softness. 
For $p\!=\!3/2$, the first-order condition gives
\begin{align}
    \sqrt{p_1} - \sqrt{1-p_1} = \sqrt{2}\, x / \tau,
\end{align}
and for $p\!=\!4/3$,
\begin{align}
    p_1^{1/3} - (1-p_1)^{1/3} = 2^{1/3}\, x / \tau,
\end{align}
recovering the results of the previous paragraph up to softness scaling as $\widetilde\heaviside_{\tau/\sqrt{2}}^1(x)$ and $\widetilde\heaviside_{\tau/\sqrt[3]{2}}^2(x)$, respectively.

\paragraph{Softness convention.}
\label{par:cross-mode-normalization}
In our sigmoidal definitions, the piecewise polynomial modes (\cref{eq:heaviside_polynomial}) transition from zero to one on the interval~$[-\tau, \tau]$.
In contrast, the smooth mode reaches near-saturation only at~$\approx\!\pm 5\tau$.
Therefore, we introduce a scaling factor for the piecewise polynomial modes as $g(x/(5\tau))$, which widens their transition to~$[-5\tau, 5\tau]$, matching that of the smooth mode.

For axiswise operators, we do not add any mode-dependent scaling constants.
As a result, the same $\tau$ value produces different effective softness across modes and problem sizes~$n$ for axiswise operators. 
For users needing comparable behavior across modes or sizes, we recommend measuring the normalized entropy $H(\mathbf{P})/\log n$ of the soft permutation matrix and calibrating~$\tau$ accordingly.

\subsection{Smoothness of $p$-norm Regularized Projections}
\label{app:pnorm-smoothness}

The preceding section verifies the smoothness classes $\mathcal{C}^1$ and $\mathcal{C}^2$ of the $p$-norm sigmoidals $\widetilde\heaviside_\tau^1$ and $\widetilde\heaviside_\tau^2$ for $n=2$.
We now prove that these smoothness classes hold for the general $n$-dimensional simplex projection and for the $p$-norm regularized optimal transport plan.

\begin{theorem}[Smoothness of $p$-norm regularized projections]
\label{thm:pnorm-smoothness}
Let $n \geq 2$, $1 < p \leq 2$, $\tau > 0$, and let $q = p/(p{-}1)$ be the conjugate exponent with $q$ a positive integer.
Define $k \coloneqq q - 2$.
\begin{enumerate}[label=(\roman*),leftmargin=*]
    \item \emph{Unit simplex.}
    The $p$-norm regularized projection $\Pi_\tau\colon \mathbb{R}^n \to \Delta_n$,
    \begin{align}
        \Pi_\tau(\mathbf{x}) \coloneqq \argmax_{\mathbf{p}\in\Delta_n} \langle \mathbf{x}, \mathbf{p} \rangle - \tfrac{\tau}{p}\|\mathbf{p}\|_p^p,
    \end{align}
    is a $\mathcal{C}^k$ map.
    For entropic regularization $\sum_j p_j \log p_j$, the projection $\Pi_\tau$ is $\mathcal{C}^\infty$.
    \item \emph{Transport polytope.}
    For $m \geq 2$, marginals $\mathbf{a}\in\Delta_n$, $\mathbf{b}\in\Delta_m$ and cost matrix $C\in\mathbb{R}^{n\times m}$, the $p$-norm regularized optimal transport plan
    \begin{align}
        \Gamma_\tau^\star(C) \coloneqq \argmin_{\Gamma\in U(\mathbf{a},\mathbf{b})} \langle \Gamma, C \rangle + \tfrac{\tau}{p}\sum_{ij}|\Gamma_{ij}|^p
    \end{align}
    is a $\mathcal{C}^k$ map of $C$ at every cost matrix for which the support $\{(i,j) : \Gamma_{ij}^\star > 0\}$ forms a connected bipartite graph.
    For entropic regularization $\sum_{ij}\Gamma_{ij}(\log\Gamma_{ij} - 1)$, the transport plan $\Gamma_\tau^\star$ is a $\mathcal{C}^\infty$ map of $C$ unconditionally, since the support is always the full bipartite graph.
\end{enumerate}
Moreover, the bound $\mathcal{C}^k$ is sharp: $\Pi_\tau$ is not $\mathcal{C}^{k+1}$, and $\Gamma_\tau^\star$ is not $\mathcal{C}^{k+1}$.
In particular, \texttt{c0} ($p\!=\!2$) gives $\mathcal{C}^0$, \texttt{c1} ($p\!=\!3/2$) gives $\mathcal{C}^1$, \texttt{c2} ($p\!=\!4/3$) gives $\mathcal{C}^2$, and \texttt{smooth} gives $\mathcal{C}^\infty$.
\end{theorem}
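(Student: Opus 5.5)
The plan is to pass to the dual problem and write the primal solution as the composition of a $\mathcal{C}^{k}$ scalar link with a smooth dual variable obtained from the implicit function theorem. The link is the truncated power $\phi(u) \coloneqq (u)_+^{q-1}$. Since $q-1 = 1/(p-1)$, the stationarity conditions give
\begin{align}
    p_j = \bigl((x_j - \lambda)/\tau\bigr)_+^{q-1}
\end{align}
for the simplex case, where $\lambda$ is the Lagrange multiplier of $\sum_j p_j = 1$. For the transport case, $\Gamma_{ij} = ((f_i+g_j-C_{ij})/\tau)_+^{q-1}$, which is exactly the primal recovery formula of \cref{app:dual_ot_problems}.

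\textbf{Regularity of the link.} For integer $q\ge 2$, the function $u\mapsto u_+^{q-1}$ is $\mathcal{C}^{q-2}=\mathcal{C}^{k}$ and not $\mathcal{C}^{k+1}$. Its $(q-2)$-th derivative is $(q-1)!\,u_+$, which has a kink at $0$. The dual objective involves the primitive $u_+^{q}/q$, which is $\mathcal{C}^{k+1}$.

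\textbf{Smoothness in case (i).} Define
\begin{align}
    F(\lambda,\mathbf{x}) = \sum_j \phi\bigl((x_j-\lambda)/\tau\bigr) - 1,
\end{align}
which is $\mathcal{C}^k$ in $(\lambda,\mathbf{x})$. First, a solution $\lambda^\star(\mathbf{x})$ exists and is unique: $F$ is continuous, nonincreasing in $\lambda$, strictly decreasing wherever some term is positive, and ranges from $+\infty$ to $-1$. Next, $\partial_\lambda F = -\tfrac{1}{\tau}\sum_j \phi'(\cdot)$, and this is nonzero at the solution because at least one $p_j>0$ and $\phi'>0$ on $(0,\infty)$.

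For $k\ge1$, the implicit function theorem then gives $\lambda^\star\in\mathcal{C}^k$. Composing with $\phi$ yields $\Pi_\tau\in\mathcal{C}^k$. For $k=0$ ($p=2$), continuity follows from uniqueness plus compactness, or directly from Lipschitzness of the Euclidean projection.

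For entropic regularization, $\Pi_\tau$ is the softmax, which is $\mathcal{C}^\infty$.

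\textbf{Smoothness in case (ii).} The dual variables are determined only up to the gauge $(f+c,g-c)$, so I fix $g_m=0$. The unknowns are then $(\mathbf{f},g_{1:m-1})$, and the equations are the marginal constraints
\begin{align}
    \Gamma\mathbf{1}=\mathbf{a}, \qquad \Gamma^\top\mathbf{1}=\mathbf{b},
\end{align}
with one redundant equation dropped. The Jacobian with respect to the duals is, up to sign and $1/\tau$, the weighted Laplacian-type block matrix
\begin{align}
    \begin{pmatrix} D_r & W \\ W^\top & D_c \end{pmatrix},
\end{align}
with edge weights $w_{ij}=\phi'(\cdot)$. These weights are positive exactly on the support. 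This is the Laplacian of the weighted bipartite support graph (with sign flips on one side). Its kernel is one-dimensional, spanned by the gauge direction, precisely when the graph is connected. After fixing the gauge, the reduced matrix is nonsingular.

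The implicit function theorem then gives $\mathcal{C}^k$ duals, and hence a $\mathcal{C}^k$ plan $\Gamma^\star(C)$, under the stated connectivity hypothesis. Uniqueness of $\Gamma^\star$ follows from strict convexity. The case $k=0$ follows from a stability argument for strictly convex programs via Berge's maximum theorem. In the entropic case all weights are positive, the graph is complete bipartite, and the exponential link is $\mathcal{C}^\infty$.

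\textbf{Sharpness.} It suffices to exhibit one path along which a component crosses the kink. For (i), take $n=2$ and use \cref{app:reduction_elementwise}. There $p_1=\widetilde\heaviside_\tau^{k}(x)$ near $x=\tau$ behaves like $1-c\,(\tau-x)_+^{q-1}$, so its $(k+1)$-th derivative jumps at $x=\tau$.

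For (ii), the $n=m=2$ reduction of \cref{app:reduction_elementwise} reduces to the same scalar equations up to rescaling of $\tau$. At $p_1\to1$ the support stays connected: the diagonal edges plus one off-diagonal edge just before the boundary, and still connected through the threshold. The plan is therefore $\mathcal{C}^k$ but not $\mathcal{C}^{k+1}$ across the point where an entry hits zero.

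\textbf{Expected obstacle.} The main difficulty is the kink points themselves. The implicit function theorem needs $F\in\mathcal{C}^k$ jointly, which holds, but one must also argue that derivatives up to order $k$ of the composition stay continuous exactly when a coordinate enters or leaves the support. This works because $\phi^{(j)}(0)=0$ for all $j\le k-1$. The same care is needed to show that the $(k+1)$-th one-sided derivatives genuinely differ in the sharpness example.
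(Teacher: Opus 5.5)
Your route is the paper's: stationarity writes the solution as the truncated power $\varphi(t)=(t/\tau)_+^{k+1}\in\mathcal{C}^k(\mathbb{R})$ of dual variables. You then apply the implicit function theorem to the scalar normalization equation in (i) and to the gauge-fixed marginal system in (ii), and read sharpness off the $n=2$ reductions. Your weighted-Laplacian argument for the reduced Jacobian and your separate continuity argument for $k=0$ fill in steps the paper only asserts. Your ``expected obstacle'' at the kinks is not one: $\varphi$ is $\mathcal{C}^k$ on all of $\mathbb{R}$, so the chain rule already covers points where an entry enters or leaves the support.

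One claim in your sharpness argument for (ii) is false. In the $n=m=2$ reduction with $\mathbf{a}=\mathbf{b}=(\tfrac{1}{2},\tfrac{1}{2})$ the plan is $\tfrac{1}{2}\begin{pmatrix} p_1 & 1-p_1\\ 1-p_1 & p_1\end{pmatrix}$, so both off-diagonal entries vanish at the same moment. Just before the threshold the support is all of $K_{2,2}$. At and beyond it the support is the diagonal, a perfect matching, which is disconnected. The kink therefore sits exactly at a cost matrix excluded by the connectivity hypothesis. On the connected side of this curve the plan is even $\mathcal{C}^\infty$, since $\varphi$ is a polynomial on $(0,\infty)$.

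The example still proves the literal statement that $C\mapsto\Gamma_\tau^\star(C)$ is not $\mathcal{C}^{k+1}$, which is all the paper's identical argument gives. So simply delete the connectivity assertion.

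If you want sharpness of the local $\mathcal{C}^k$ claim \emph{at} a connected-support matrix, break the symmetry of the marginals, e.g.\ $\mathbf{a}=(\tfrac{1}{2},\tfrac{1}{2})$, $\mathbf{b}=(\tfrac{1}{3},\tfrac{2}{3})$. The plan $\begin{pmatrix} t & \tfrac{1}{2}-t\\ \tfrac{1}{3}-t & \tfrac{1}{6}+t\end{pmatrix}$ then loses only $\Gamma_{21}$ at $t=\tfrac{1}{3}$, where the other three entries still form a spanning path. The same scalar first-order analysis shows $\Gamma_{21}$ behaves like $c\,(\delta-\delta^\ast)_+^{k+1}$ in $\delta=C_{11}-C_{12}-C_{21}+C_{22}$, giving the jump in the $(k+1)$-th derivative there.

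Finally, for (i) with $n>2$ you still need the paper's embedding $\mathbf{x}=(x_1,x_2,-M,\ldots,-M)$ to transfer the $n=2$ kink to every $n$.
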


Since the cost matrix in the sorting application is $C_{ij} = (x_i - y_j)^2$, which is $\mathcal{C}^\infty$ in $\mathbf{x}$, the transport plan $\Gamma_\tau^\star$ inherits the same $\mathcal{C}^k$ regularity as a function of the input $\mathbf{x}$ by the chain rule.
The connectivity condition is generically satisfied, \ie{} it holds for all cost matrices outside a set of measure zero.

\begin{proof}
The proof is based on the implicit function theorem applied to the optimality conditions of the optimization problems.
Both parts rely on the positive-part power function, defined as $\varphi\colon \mathbb{R} \to [0, \infty)$ by $\varphi(t) \coloneqq (t/\tau)_+^{k+1}$, of which all derivatives up to order $k$ exist and are continuous, therefore $\varphi \in \mathcal{C}^k(\mathbb{R})$.

\paragraph{Part (i): Unit simplex.}
The KKT conditions for the simplex-constrained problem are
\begin{align}
    \label{eq:pnorm-simplex-kkt}
    x_i - \tau (p_i^\star)^{p-1} - \nu + \mu_i &= 0, \quad 1 \leq i \leq n, \\
    \sum_{i=1}^n p_i^\star &= 1, \\
    p_i^\star \geq 0, \quad \mu_i \geq 0, \quad \mu_i\, p_i^\star &= 0,
\end{align}
where $\nu$ is the multiplier for the equality constraint $\mathbf{1}^\top \mathbf{p} = 1$ and $\mu_i$ are the multipliers for the non-negativity constraints.
When $p_i^\star > 0$, complementary slackness gives $\mu_i = 0$, so the stationarity condition reduces to $p_i^\star = ((x_i - \nu)/\tau)^{1/(p-1)} = \varphi(x_i - \nu)$.
When $p_i^\star = 0$, the stationarity condition gives $\mu_i = \nu - x_i \geq 0$, \ie{} $x_i \leq \nu$.
In both cases $p_i^\star = \varphi(x_i - \nu)$, and substituting into the equality constraint yields
\begin{align}
    \sum_{i=1}^n \varphi(x_i - \nu) = 1.
\end{align}

Define $h\colon \mathbb{R} \times \mathbb{R}^n \to \mathbb{R}$ by $h(\nu, \mathbf{x}) \coloneqq \sum_{i=1}^n \varphi(x_i - \nu) - 1$.
As a finite sum of translates of $\varphi$, the function $h$ is $\mathcal{C}^k$ in $(\nu, \mathbf{x})$.
The partial derivative with respect to $\nu$ is
\begin{align}
    \frac{\partial h}{\partial \nu} = -\sum_{i=1}^n \varphi'(x_i - \nu),
\end{align}
which is strictly negative at any $(\nu, \mathbf{x})$ satisfying $h = 0$.
Indeed, $\sum_i \varphi(x_i - \nu) = 1 > 0$ implies that at least one index $i$ has $x_i > \nu$, and for any such index $\varphi'(x_i - \nu) > 0$.

For $k \geq 1$ (\ie{} $p < 2$), the Implicit Function Theorem for $\mathcal{C}^k$ maps yields that $\nu$ is a $\mathcal{C}^k$ function of $\mathbf{x}$.
Furthermore, the implicit $\nu(\mathbf{x})$ is unique and globally defined, since $h(\cdot, \mathbf{x})$ is strictly decreasing (it satisfies $\partial h / \partial \nu < 0$ at every root).
Each component $p_i^\star(\mathbf{x}) = \varphi(x_i - \nu(\mathbf{x}))$ is then $\mathcal{C}^k$ as a composition of $\mathcal{C}^k$ functions, establishing $\Pi_\tau \in \mathcal{C}^k$.
For entropic regularization, $\Pi_\tau(\mathbf{x}) = \exp(\mathbf{x}/\tau)/\sum_j \exp(x_j/\tau)$, which is $\mathcal{C}^\infty$.

\paragraph{Part (ii): Transport polytope.}
The argument follows the same structure as Part~(i).
From the dual formulation (\cref{app:dual_ot_problems}), the optimal transport plan satisfies $\Gamma_{ij}^\star = \varphi(f_i^\star + g_j^\star - C_{ij})$, where $(\mathbf{f}^\star, \mathbf{g}^\star)$ are the dual potentials.
The marginal constraints $\sum_j \varphi(f_i + g_j - C_{ij}) = a_i$ and $\sum_i \varphi(f_i + g_j - C_{ij}) = b_j$ define, after fixing $g_m = 0$ to break the shift symmetry, a $\mathcal{C}^k$ implicit system of $n + m - 1$ equations in $n + m - 1$ unknowns.
The Jacobian is non-singular if and only if the bipartite support graph $\{(i,j) : \Gamma_{ij}^\star > 0\}$ is connected, which is the standard non-degeneracy condition for optimal transport.
Under this condition, the implicit function theorem gives $\Gamma_\tau^\star(C) \in \mathcal{C}^k$.
For entropic regularization, $\Gamma_{ij}^\star = \exp((f_i^\star + g_j^\star - C_{ij})/\tau) > 0$ for all $(i,j)$, so the support is always connected and the same argument yields $\Gamma_\tau^\star \in \mathcal{C}^\infty$.

\paragraph{Sharpness.}
For any $n \geq 2$, restricting $\Pi_\tau$ to inputs of the form $\mathbf{x} = (x_1, x_2, -M, \ldots, -M)$ with $M$ large enough that only the first two components are active reduces the projection to the $n = 2$ case.
The preceding section shows that this two-dimensional projection is the sigmoidal $p_1 = \widetilde\heaviside_\tau^k(x)$, whose $(k{+}1)$-th derivative is discontinuous at $|x| = \tau$.
Since this is a restriction of $\Pi_\tau$, the full map cannot be $\mathcal{C}^{k+1}$.
The same embedding applies to $\Gamma_\tau^\star$ via the $n = m = 2$ reduction shown in the preceding section.
\end{proof}

\subsection{Gating- vs Integration-like behavior of axiswise operators}
\label{app:gating-integration-axiswise}
In the main text, we have seen that there are two principled ways to get a soft \operator{relu} surrogate from a soft sigmoidal, either by integration
\begin{align}
    \operatorname{relu}_\tau(x)
        \coloneqq{}& \int_0^x\heaviside_{\tau}(t)\,\mathrm d t,
\end{align}
which, in the case of the \texttt{smooth} regularizer, is simply a Softplus function, or via a gating mechanism
\begin{align}
    \operatorname{relu}_\tau(x)
        \coloneqq{}&  \heaviside_{\tau}(x)\cdot x,
\end{align}
which, in the case of an entropic regularizer, is simply a SiLU function, 

We now observe that a similar mechanism is at play for axiswise operators: To get a soft surrogate for the \operator{sort} operator, we \say{gate} the \operator{argsort} operator with the input, i.e.
\begin{align}
    \sort_\tau(\mathbf{x})
        \coloneqq{}& \argsort_{\tau}(\mathbf{x}) \mathbf{x} = P^\star_\tau(\mathbf{x})\, \mathbf{x}.
\end{align}
Intuitively, this means that the \operator{sort} operator defined in this way behaves qualitatively similar to the SiLU function, which has a characteristic minimum, in contrast to the monotonically increasing Softplus function.

A natural question is whether a version of the \operator{sort} operator exists that behaves similarly to the Softplus function.
Unfortunately, this would require integrating the multivariate \operator{argsort} function.
However, we are often anyway mainly interested in the \emph{gradient} of the soft \operator{sort} surrogate, while leaving the forward pass hard via straight-through estimation.
The gradient (or, more precisely, the Jacobian) is extremely simple to compute. It is the function we would integrate, \ie{} the \operator{argsort} operator matrix.

In code, similar to the straight-through trick, we can manipulate our \operator{sort} implementation to return the Jacobian of the integration-style version by using a stop-gradient operation on the \operator{argsort} operator, \ie{}
\begin{lstlisting}
soft_idx = jax.lax.stop_gradient(sj.argsort(x))
sorted_values = sj.take_along_axis(x, soft_idx)
\end{lstlisting}
This option is available via a simple flag, and it can be easily combined with straight-through estimation to get the hard forward pass and integration-style backward pass, \eg{}
\begin{lstlisting}
sorted_values_st = sj.sort_st(x, gated_grad=False)
\end{lstlisting}

\subsection{Entropy-regularized dual of permutahedron projection}
\label{app:permutahedron-new-method}

\paragraph{Primal LP.}
The optimization problem over the permutahedron as stated in the main text is given by
\begin{align}
    \max_{\mathbf{p}\in\mathcal{P}(\mathbf{z})} \langle \mathbf{y}, \mathbf{p} \rangle.
\end{align}
We can equivalently characterize the permutahedron constraint $\mathbf{p}\in\mathcal{P}(\mathbf{z})$ through a set of majorization inequalities
\begin{align}
    \sum_{i=1}^{k} p_i^\downarrow \leq \sum_{i=1}^{k} z_i^\downarrow
    \quad \forall k \colon  1 \leq k < n
\end{align}
and the sum equality
\begin{align}
    \sum_{i=1}^{n} p_i = \sum_{i=1}^{n} z_i,
\end{align}
where $\mathbf{p}^\downarrow$ and $\mathbf{z}^\downarrow$ are sorted in descending order.

\paragraph{Reduction to ordered cone.}
W.l.o.g., we can restrict the search space to ordered $\mathbf{p}$ by enforcing
\begin{align}
    p_{i} \geq p_{i+1} \quad \forall i\colon 1 \leq i < n
\end{align}
and then reorder the optimal solution back via the inverse permutation that sorts $\mathbf{z}$.
Crucially, on the ordered cone, the majorization inequalities become linear in $\mathbf{p}$
\begin{align}
    \sum_{i=1}^{k} p_i \leq \sum_{i=1}^{k} z_i^\downarrow
    \quad \forall k \colon  1 \leq k < n.
\end{align}

We can therefore rewrite the permutahedron LP as
\begin{align}
    \begin{split}
        \max_{\mathbf{p}\in\mathbb{R}^n} &\langle \mathbf{y}^\downarrow, \mathbf{p} \rangle
            \\
        \text{subject to } A\mathbf{p}&\leq \mathbf{b}
            \\
        \mathbf{1}^\top \mathbf{p} &= c
            \\
        D\mathbf{p}&\geq 0,
    \end{split}
\end{align}
where we used $A\in\mathbb{R}^{(n-1)\times n}$, $\mathbf{b}\in\mathbb{R}^{(n-1)}$, $c\in\mathbb{R}$, $D\in\mathbb{R}^{(n-1)\times n}$, defined as
\begin{align}
    (A\mathbf{p})_k \coloneqq{}& \sum_{i=1}^{k}p_i
        \quad \forall 1\leq k < n
            \\
    b_k \coloneqq{}& \sum_{i=1}^{k}z_i^\downarrow
        \quad \forall 1\leq k < n
            \\
    c \coloneqq{}& \sum_{i=1}^{n}z_i
        \\
    (D\mathbf{p})_i \coloneqq{}& p_i - p_{i+1}
        \quad \forall 1\leq i < n.
\end{align}

\paragraph{Smoothing.}
\label{app:smooth-bounds}
The LP above is non-smooth in two ways.
First, computing the majorization bounds $b_k = \sum_{i=1}^{k} z_i^\downarrow$ requires sorting $\mathbf{z}$, which is not a smooth operation.
Second, the LP itself is a linear program whose solution jumps between vertices.
We address both by (i) replacing $\mathbf{b}$ with a smooth relaxation $\widetilde{\mathbf{b}}$, and (ii) adding entropic regularization on the inequality slack variables.

For the bounds, we can rewrite $b_k$ as an optimization problem over subsets of indices, \ie{}
\begin{align}
    b_k = \sum_{i=1}^{k} z_i^\downarrow = \max_{\substack{S\subseteq[n] \\ |S|=k}} \sum_{i\in S} z_i.
\end{align}
We can then replace the hard maximum with a log-sum-exp, which leads to a smooth relaxation, \ie{}
\begin{align}\label{eq:smooth-bounds}
    \widetilde{b}_k \coloneqq \tau \log \sum_{\substack{S\subseteq[n]\\|S|=k}} \exp\Bigl(\frac{1}{\tau}\sum_{i\in S} z_i\Bigr)
    = \tau \log\, e_k\bigl(e^{\mathbf{z}/\tau}\bigr),
\end{align}
where $e_k(\mathbf{x}) = \sum_{|S|=k} \prod_{i\in S} x_i$ sums all products of $k$ distinct elements from $\mathbf{x}$.
In the limit $\tau\to 0^+$, we have $\widetilde{b}_k \to b_k$.
Since $e_n(\mathbf{x}) = \prod_i x_i$, we directly get $\widetilde{b}_n = \sum_i z_i = c$ thereby preserving the equality constraint.
Note also that the relaxed bounds satisfy $\widetilde{b}_k \geq b_k$, so the feasible set is a slight superset of the true permutahedron.

We next explain how the smoothed bounds can be computed in practice.
For this we use the recurrence
\begin{align}
    E_0^{(j)} &= 1, \quad E_k^{(0)} = 0 \;\text{ for } k\geq 1,
        \\
    E_k^{(j)} &= E_k^{(j-1)} + e^{z_j/\tau}\cdot E_{k-1}^{(j-1)},
\end{align}
where $E_k^{(j)} = e_k(e^{z_1/\tau},\ldots,e^{z_j/\tau})$.
This requires $O(n^2)$ time runtime.
Note that we work in log-space ($\operatorname{logaddexp}$) for numerical stability and use optimal online checkpointing~\citep{stumm2010new} to reduce memory from $O(n^2)$ to $O(n\sqrt{n})$.

In order to smoothen the LP, we introduce slack variables and add entropic regularization onto them using $\phi(t) = t\log(t) - t$, \ie{}
\begin{align}
    \begin{split}
         \max_{\mathbf{p}\in\mathbb{R}^n,\mathbf{s}\in\mathbb{R}^{n-1},\mathbf{d}\in\mathbb{R}^{n-1}} &\langle \mathbf{y}^\downarrow, \mathbf{p} \rangle - \tau \sum_{k=1}^{n-1} \phi(s_k) - \tau \sum_{i=1}^{n-1} \phi(d_i)
            \\
        \text{subject to }
        A\mathbf{p} + \mathbf{s} &= \widetilde{\mathbf{b}}
            \\
        \mathbf{1}^\top \mathbf{p} &= c
            \\
        D\mathbf{p} &= \mathbf{d}.
    \end{split}
\end{align}

\paragraph{Dual problem and solution recovery.}
The smoothed primal problem has the dual problem
\begin{align}
    \begin{split}
        \min_{\alpha\in\mathbb{R}^{n-1},\beta\in\mathbb{R}^{n-1},\nu\in\mathbb{R}} &\langle \alpha, \widetilde{\mathbf{b}} \rangle + \nu c + \tau \sum_{k=1}^{n-1} e^{-\alpha_k/\tau} + \tau \sum_{i=1}^{n-1} e^{-\beta_i/\tau}
            \\
        \text{subject to } 
        &\mathbf{y}^\downarrow - A^\top\alpha + D^\top\beta - \nu\mathbf{1} = 0.
    \end{split}
\end{align}
We now proceed by first eliminating $\nu$ via the equality constraint
\begin{align}
    \mathbf{0} = (\mathbf{y}^\downarrow + D^\top\beta - \nu\mathbf{1})_{n} \Rightarrow \nu(\beta) = y_{n}^\downarrow + (D^\top\beta)_{n}.
\end{align}
Now we define
\begin{align}
    \mathbf{u}(\beta) = \mathbf{y}^\downarrow + D^\top\beta - \nu(\beta)\mathbf{1},
\end{align}
which satisfies $u_{n}(\beta) = 0$. 
Note that have $\mathbf{u}(\beta) = A^\top\alpha$, which means
\begin{align}
    \alpha_k(\beta) = u_k(\beta) - u_{k+1}(\beta) \quad \forall k\colon 1\leq k < n.
\end{align}
Finally, substituting $(\alpha_k(\beta), \nu(\beta))$ into the dual gives an unconstrained optimization problem over $\beta$, which we can solve via L-BFGS.

In tha last step we recover the primal solution from the dual solution $(\alpha^\star, \beta^\star, \nu^\star)$.
This is done via
\begin{align}
    \mathbf{s}^\star &= \exp(-\alpha^\star/\tau)
        \\
    \mathbf{d}^\star &= \exp(-\beta^\star/\tau)
\end{align}
and $\mathbf{p}^\star$ is recovered by first defining prefix sums
\begin{align}
    r_k \coloneqq \widetilde{b}_k - s_k^\star = \sum_{i=1}^{k} p_i^\star \quad \forall\,k\colon 1 \leq k < n,
\end{align}
and then computing $\mathbf{p}^\star$ by differencing
\begin{align}
    p_1^\star &= r_1
        \\
    p_k^\star &= r_k - r_{k-1} \quad\forall\, k\colon 2\leq k < n
        \\
    p_n^\star &= c - r_{n-1}.
\end{align}

\paragraph{Backward pass.}
The solver uses a custom VJP that solves an adjoint system via conjugate gradients, with $O(n)$ cost per iteration due to the banded structure of the LP constraints.
Gradients through the smooth bounds use standard reverse-mode autodiff through \cref{eq:smooth-bounds}.

\section{Additional Benchmark Experiments} \label{app:benchmarks}
We include additional benchmarking experiments for elementwise operators (\cref{fig:benchmark_elementwise_all}) and axiswise operators (\cref{fig:benchmark_sort_all,fig:benchmark_argsort_all,fig:benchmark_max_all,fig:benchmark_top_k_all,fig:benchmark_median_all,fig:benchmark_rank_all,fig:benchmark_argmax_all}).
Each figure shows runtime, JIT compilation time, and peak memory consumption as a function of input array size, broken down by regularization mode (\texttt{smooth}, \texttt{c0}, \texttt{c1}, \texttt{c2}).

\begin{center}
\vspace{-2mm}
    \includegraphics[width=\textwidth]{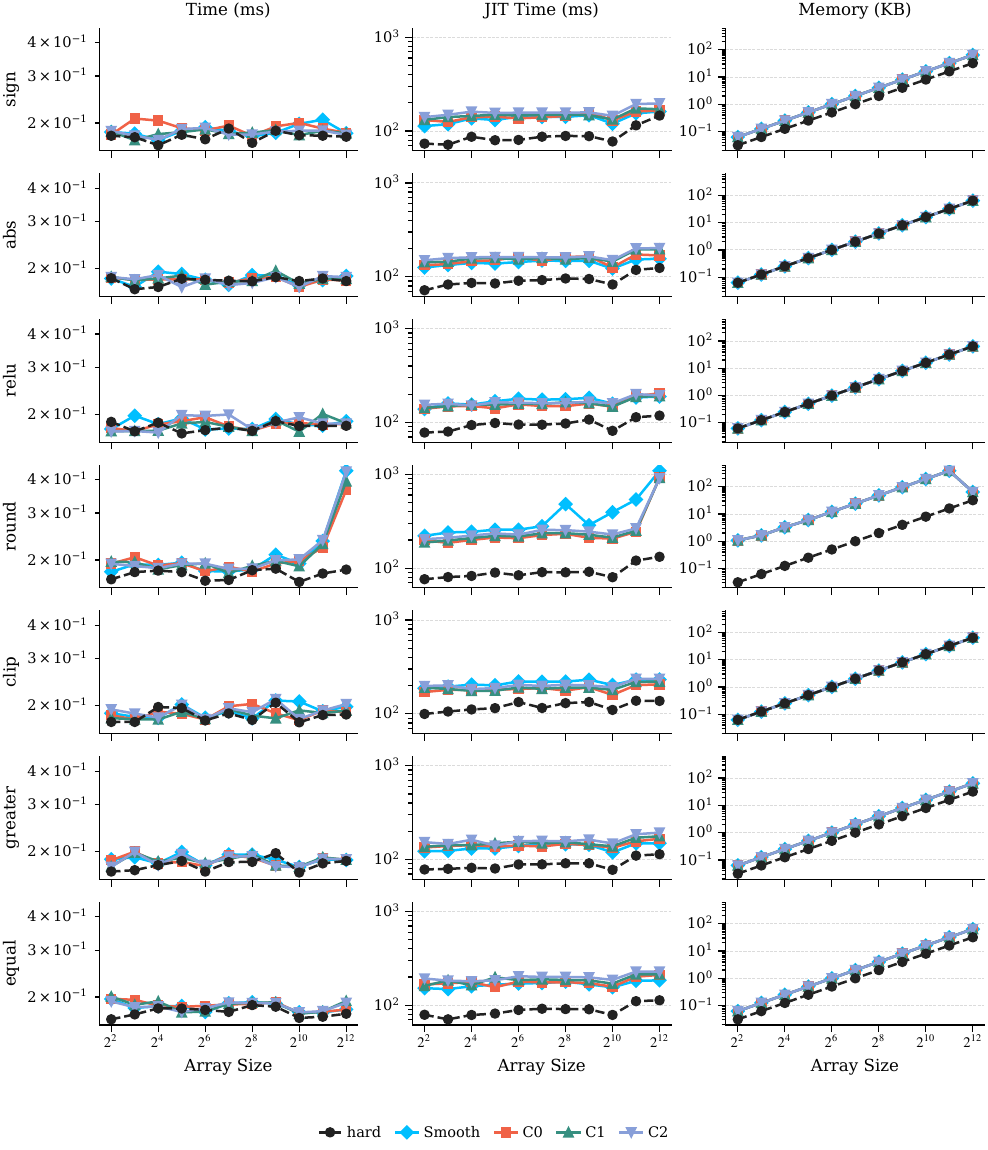}
    \vspace{-2mm}
    \captionof{figure}{Benchmark results for elementwise operators.
    }
    \label{fig:benchmark_elementwise_all}
\end{center}

\vspace{4mm}

\begin{center}
\vspace{-2mm}
    \includegraphics[width=0.95\textwidth]{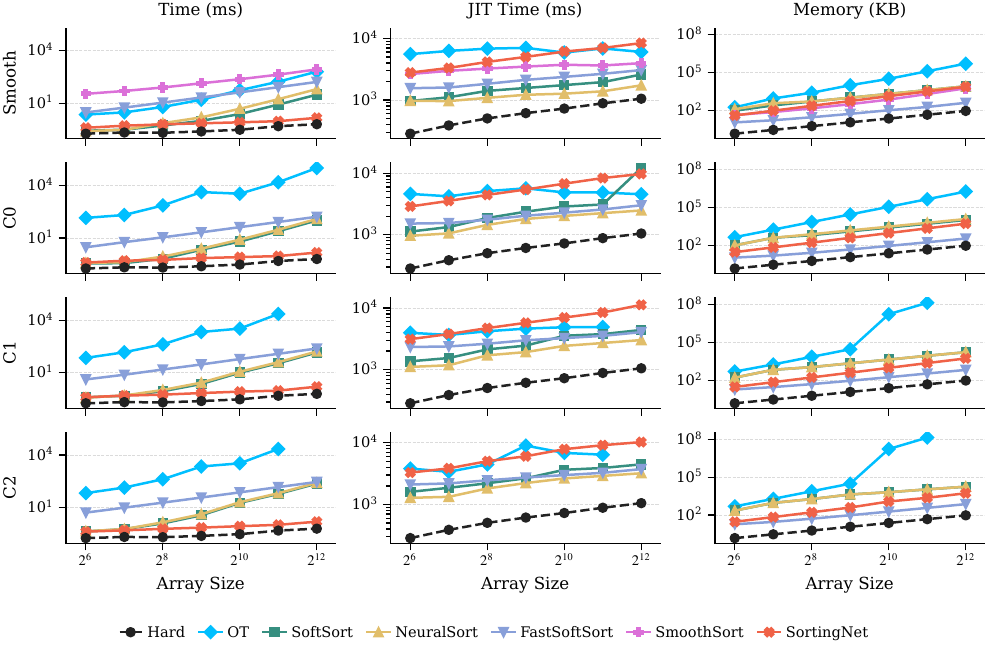}
    \vspace{-2mm}
    \captionof{figure}{Benchmark results for \texttt{sj.sort}.}
    \label{fig:benchmark_sort_all}
\end{center}

\vspace{4mm}

\begin{center}
\vspace{-2mm}
    \includegraphics[width=0.95\textwidth]{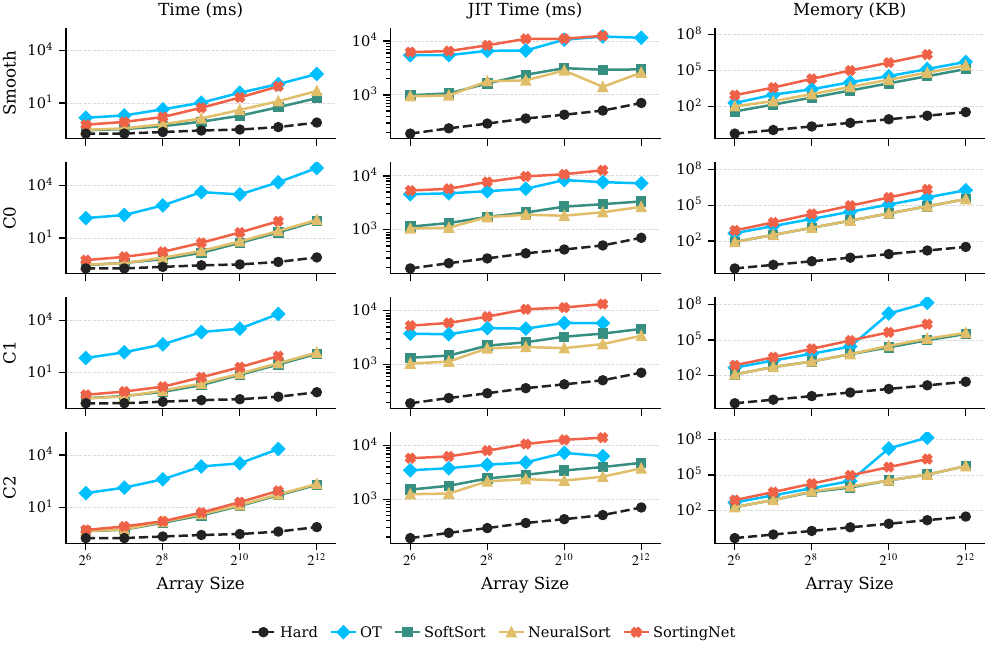}
    \vspace{-2mm}
    \captionof{figure}{Benchmark results for \texttt{sj.argsort}.}
    \label{fig:benchmark_argsort_all}
\end{center}

\vspace{4mm}

\begin{center}
\vspace{-2mm}
    \includegraphics[width=0.95\textwidth]{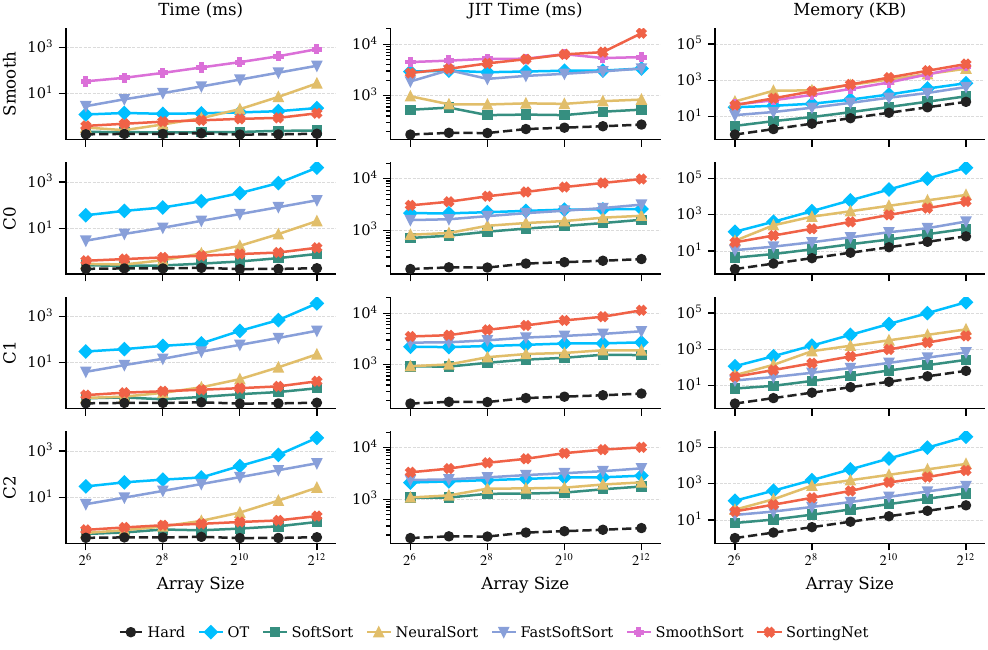}
    \vspace{-2mm}
    \captionof{figure}{Benchmark results for \texttt{sj.max}.}
    \label{fig:benchmark_max_all}
\end{center}

\vspace{4mm}

\begin{center}
\vspace{-2mm}
    \includegraphics[width=0.95\textwidth]{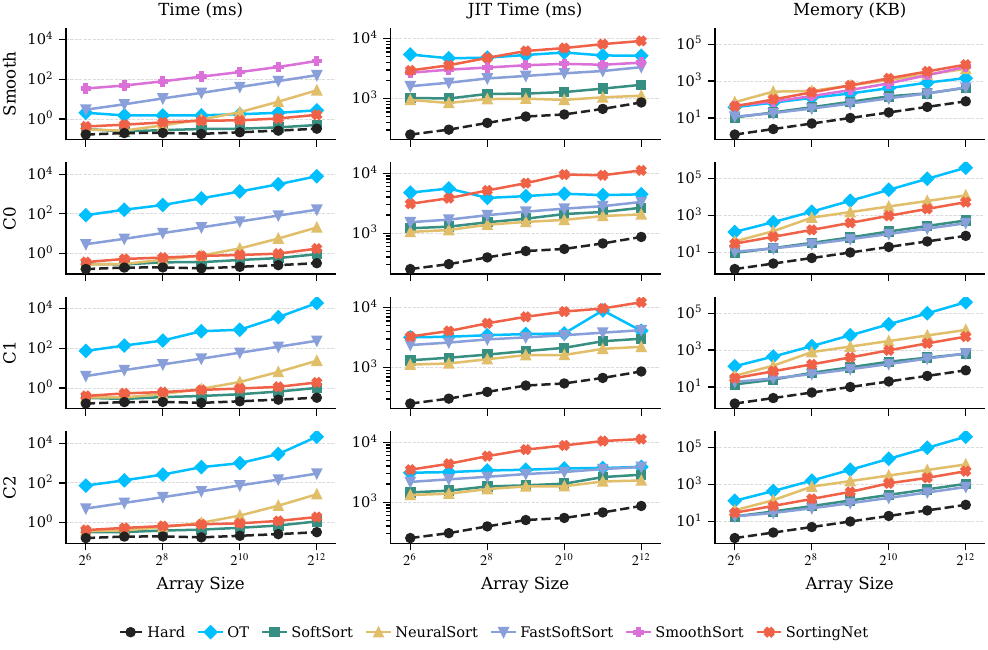}
    \vspace{-2mm}
    \captionof{figure}{Benchmark results for \texttt{sj.top\_k}.}
    \label{fig:benchmark_top_k_all}
\end{center}

\vspace{4mm}

\begin{center}
    \includegraphics[width=0.95\textwidth]{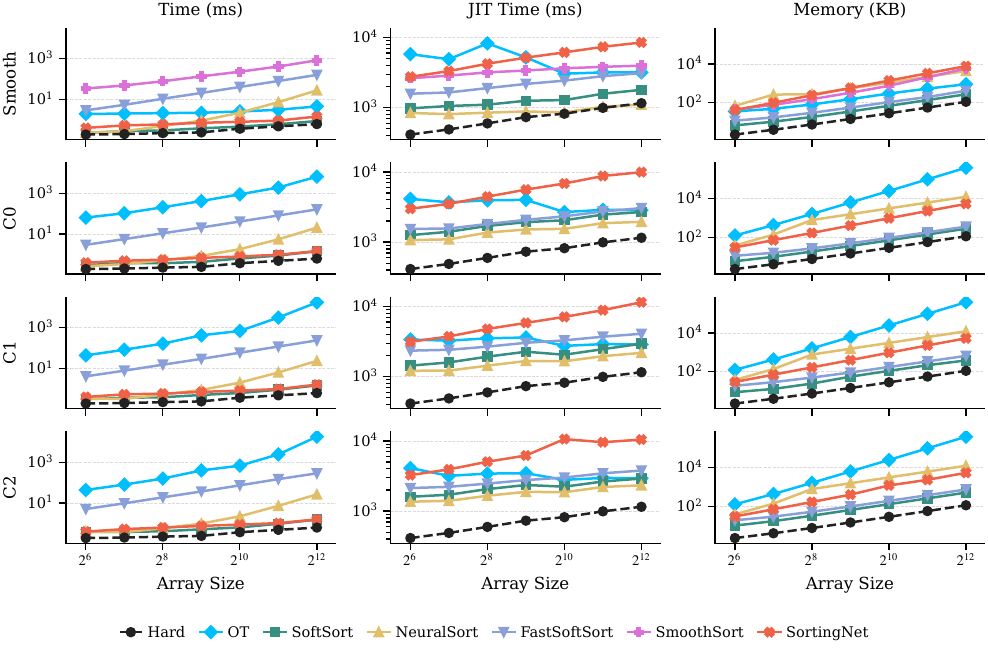}
    \vspace{-2mm}
    \captionof{figure}{Benchmark results for \texttt{sj.median}.}
    \label{fig:benchmark_median_all}
\end{center}

\vspace{4mm}

\begin{center}
\vspace{-2mm}
    \includegraphics[width=0.95\textwidth]{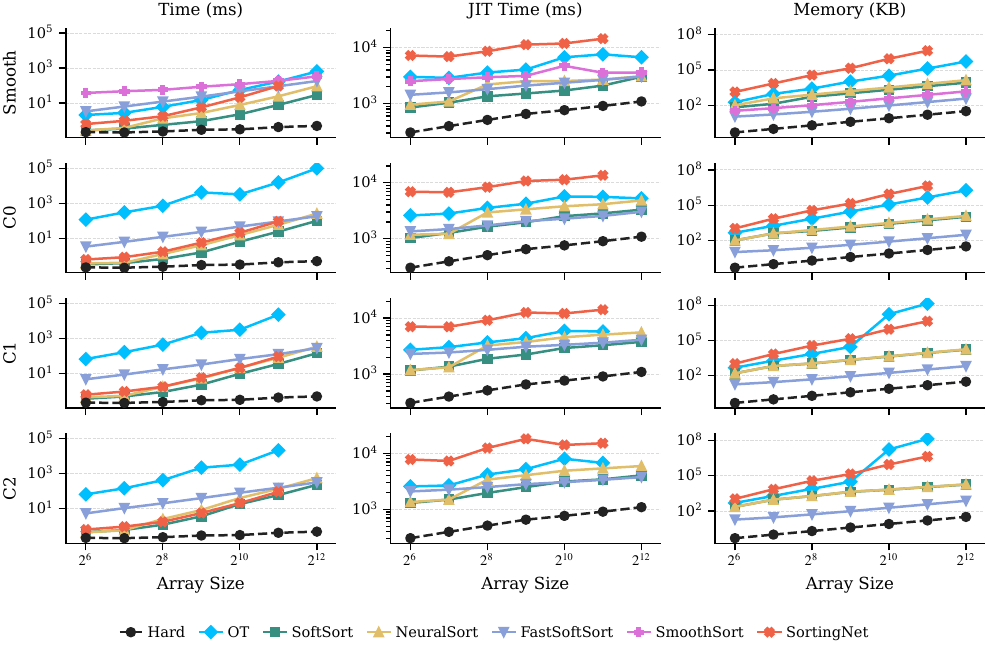}
    \vspace{-2mm}
    \captionof{figure}{Benchmark results for \texttt{sj.rank}.}
    \label{fig:benchmark_rank_all}
\end{center}

\vspace{4mm}

\begin{center}
\vspace{-2mm}
    \includegraphics[width=0.95\textwidth]{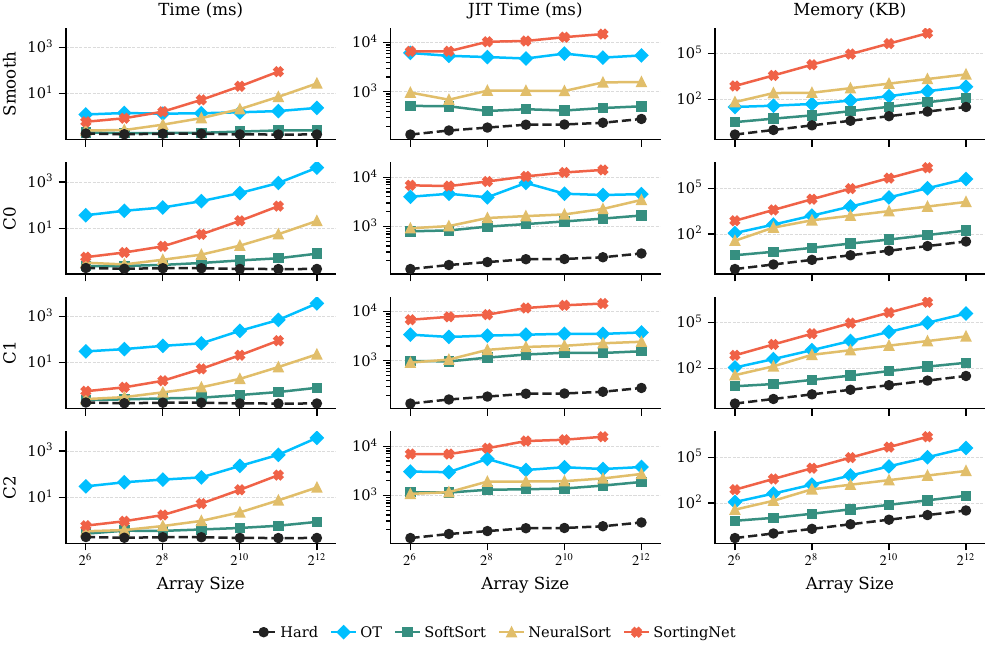}
    \vspace{-2mm}
    \captionof{figure}{Benchmark results for \texttt{sj.argmax}.}
    \label{fig:benchmark_argmax_all}
\end{center}

\end{document}